\definecolor{darkgreen}{rgb}{0,0.5,0}
\definecolor{darkred}{rgb}{0.7,0,0}
\definecolor{teal}{rgb}{0.3,0.8,0.8}
\newcommand{\kibitz}[2]{\ifnum\Comments=1\textcolor{#1}{\textsf{\footnotesize #2}}\fi}
\renewcommand{\hat}{\widehat}
\newcounter{relctr} %
\everydisplay\expandafter{\the\everydisplay\setcounter{relctr}{0}} %
\newcommand\labelrel[2]{%
  \begingroup
    \refstepcounter{relctr}%
    \stackrel{\textnormal{(\alph{relctr})}}{\mathstrut{#1}}%
    \originallabel{#2}%
  \endgroup
}
\title{Investigating the Role of Negatives in \\Contrastive Representation Learning}
\author{
  Jordan T. Ash$^*$ \qquad Surbhi Goel$^*$ \qquad Akshay Krishnamurthy$^*$ \qquad Dipendra Misra\footnote{Authors listed in alphabetial order. All authors contributed equally.} \\
  \vspace{-2mm} \\
  Microsoft Research NYC\\
\texttt{\{ash.jordan, goel.surbhi, akshaykr, dimisra\}@microsoft.com} 
}
\date{}
\begin{document}

\maketitle

\begin{abstract}

Noise contrastive learning is a popular technique for unsupervised representation learning. In this approach, a representation is obtained via reduction to supervised learning, where given a notion of semantic similarity, the learner tries to distinguish a similar (positive) example from a collection of random (negative) examples. The success of modern contrastive learning pipelines relies on many parameters such as the choice of data augmentation, the number of negative examples, and the batch size; however, there is limited understanding as to how these parameters interact and affect downstream performance. We focus on disambiguating the role of one of these parameters: the number of negative examples. Theoretically, we show the existence of a collision-coverage trade-off suggesting that the optimal number of negative examples should scale with the number of underlying concepts in the data. Empirically, we scrutinize the role of the number of negatives in both NLP and vision tasks. In the NLP task, we find that the results broadly agree with our theory, while our vision experiments are murkier with performance sometimes even being insensitive to the number of negatives. We discuss plausible explanations for this behavior and suggest future directions to better align theory and practice.

\end{abstract}

\section{Introduction}
\label{sec:intro}

Unsupervised representation learning refers to a suite of algorithmic
approaches for discovering meaningful transformations of complex data in the
absence of an explicit supervision signal. These representations are trained to preserve and expose semantic information in the data, so that once learned, they can be used effectively in downstream supervised tasks of interest. In recent years, these methods have seen tremendous success when combined with deep learning models, obtaining state of the art results on a variety of important language and vision tasks~\citep{smith2005contrastive,chopra2005learning,mikolov2013efficient,oord2018representation,bachman2019learning,clark2020electra,chen2020simple}.

One popular approach for representation learning
is contrastive learning, or noise contrastive estimation (NCE)~\citep{gutmann2010noise}. In
NCE, a notion of semantic similarity is used to create a
\textit{self-supervision} signal that encourages similar points to be
represented (or embedded) in a similar manner. For example, a typical
approach is to train an embedding so that similar points have large
inner product in the embedding space and dis-similar points have small
inner product. Intuitively, representations trained in this manner capture the semantic similarity of the raw data, so we might expect that they are useful in downstream supervised learning tasks. Indeed, NCE is employed in the \textsc{Electra} language model~\citep{clark2020electra} and has also been used effectively in computer vision tasks at ImageNet scale~\citep{chen2020simple}.

While the implementation details vary, a mathematical abstraction for representation learning with NCE is: (1) a single NCE example consists of $k+2$ raw inputs, $(x,x^+,
x_1^-,\ldots,x_k^-)$, where $(x,x^+)$ are semantically similar and
$x_i^-$ are sampled from some base distribution, (2) the
representation $f$ is trained to encourage $f(x)^\top f(x^+) \gg
f(x)^\top f(x_i^-)$ for each $i$. This second step can be done with
standard classification objectives, such as the cross-entropy loss,
where the model is viewed as a classifier over $k+1$ labels. For
example, a candidate objective is
\begin{align*}
\mathop{\EE}_{x,x^+,x_{1:k}^-} \sbr{- \log \rbr{ \frac{e^{f(x)^\top f(x^+)} }{e^{f(x)^\top f(x^+)} + \sum_{i=1}^ke^{f(x)^\top f(x_i^-)}}}}.
\end{align*}
This objective encourages $f(x)^\top f(x^+)$ to be large, so that the representation efficiently captures the semantic similarity inherent in the NCE sampling scheme. 

Modern contrastive pipelines have several parameters that together contribute to the success of resulting representations. These parameters include modeling choices, such as the number of negatives $k$, the semantic sampling procedure for $(x, x^+)$, and neural network architecture, along with optimization decisions, such as batch size. While there have been large scale empirical studies on contrastive learning~\citep{chen2020simple}, it remains unclear how these different choices interact with each other to affect the downstream performance of the learned representations. Indeed, a major challenge is that exorbitant computational costs for training contrastive models precludes comprehensive
ablations to disentangle the effects of each design decision. Instead a hybrid approach that leverages theoretical insights to guide systematic experimentation may be more effective at shedding light on the role of these parameters.

As an example of entangled design choices, consider the number of negative samples $k$ and the batch size $B$ in the NCE objective. In practice, it is standard to set $k$ as a function of the batch size $B$, as $k=2B-2$, and then to set the batch size as large as the hardware can support. When implemented on a GPU, this allows for many more gradient evaluations with minimal computational overhead. However, in this setup, the effect of increasing $k$ on downstream performance is confounded by improvements from increasing the batch size.

In this paper, we focus on the number of negatives $k$, and investigate its role in contrastive learning. The key question we ask is \textit{how does the number of negative samples affect the quality of representations learned by NCE?} As mentioned, existing empirical evidence confounds $k$ with the batch size $B$, so it does not provide much of an answer to this question. Moreover, conceptual reasoning paints a fairly nuanced picture.
On one hand, a large value of $k$ implies that the
negatives are more likely to cover the semantic space (\textit{coverage}), which
encourages the representation to disambiguate all semantically
distinct concepts. However, there is a risk that some of the negative samples will be semantically very similar (\textit{collisions}) to the positive example $x^+$, which compromises the supervision signal and may lead to worse representation quality.

\paragraph{Our contributions.} For our theoretical results, we adopt the framework of~\citet{saunshi2019theoretical}, where the semantic similarity in the contrastive distribution is directly related to the labels in a downstream classification task. We refine the results of~\citet{saunshi2019theoretical} with a novel and tighter \textit{error transfer} theorem that bounds the classification error by a function of the NCE error. We then study how this function depends on the number of concepts/classes in the data and the number of negatives in the NCE task. Our analysis captures the trade-off between coverage over the classes and collisions, and it reveals an expression for the ideal number of negatives $k$, which roughly scales as the number of concepts in the data. We emphasize that our sharper theoretical results are instrumental even for obtaining this qualitative trend; indeed, the analysis in~\citet{saunshi2019theoretical} instead suggests that increasing $k$ leads to decreasing performance.

Empirically, we study the effect of the number of negative samples on the performance of the final classifier on a downstream classification task for both vision and natural language datasets. 
For our language task, we observe that performance increases with $k$ up to a point, but it deteriorates as we increase $k$ further. 
The experiments also confirm that the optimal choice of $k$ depends on the number of concepts in the dataset. 
We do not observe the same trends in our vision experiments, and we describe how conflating factors---most notably data augmentation---cause disagreement between the observed empirical behavior and the predictions from the theoretical framework typically used to study NCE~\citep{saunshi2019theoretical,chuang2020debiased}.\looseness=-1

\paragraph{Related work.}
\label{sec:related}

Self-supervised learning has elicited breakthrough successes in representation learning, achieving state of the art performance across a wide array of important machine learning tasks, including computer vision~\citep{he2020momentum,v1,v3,v4,bachman2019learning}, natural language processing~\citep{n1,n2,n4,n5}, and deep reinforcement learning.
In vision,  \citet{chen2020simple} have shown that simple NCE algorithms are able to obtain extremely competitive performance on the challenging ImageNet dataset, something that was previously thought to require extensive supervised training only.
In NLP, the NCE approach has garnered much attention since the pioneering work of~\citet{mikolov2013efficient} demonstrated its efficacy in learning word-level representations.
In Deep RL, NCE-style losses have been recently employed to learn world models in complex environments~\citep{nachum2018near, srinivas2020curl, rl1, rl2, rl3}.

Theoretical work on  representation learning is quite nascent, with most works focused on mathematical explanations for the efficacy of these approaches~\citep{saunshi2019theoretical,lee2020predicting}. One is via conditional independence and redundancy~\citep{tosh2020contrastivea,tosh2020contrastiveb}. %
Another theory proposed by~\citet{wang2020understanding} is that contrastive representations simultaneously optimize for alignment of similar points and uniformity in the embedding space, in the asymptotic regime where the number of negatives tends to infinity. This asymptotic regime was also studied by~\citet{foster2020unified}, who analyze a contrastive estimator for an information-gain quantity. However,~\citet{foster2020unified} do not study representation learning and~\citet{wang2020understanding} do not theoretically analyze downstream performance as we do. 
Indeed, our results show that there can be a price for too many negatives due to collisions, and perhaps call into question whether this asymptotic regime should be used to explain the downstream performance of NCE representations.\footnote{Note that similar concerns have been raised regarding viewing representation learning as a mutual information maximization problem~\citep{tschannen2019mutual}.} This collision phenomenon has been observed by \citet{chuang2020debiased}, who propose a debiased version of the contrastive loss that attempts to correct for negative samples coming from the positive class. They experimentally observe significant gains from removing the effect of collision. In contrast, here we study the original contrastive objective that is primarily used in practice and characterize how an appropriate choice of $k$ can balance the collision and coverage trade-off.

Our theoretical results closely follow the setup of \citet{saunshi2019theoretical}. We adopt essentially the same framework in the present paper, but unlike~\citet{saunshi2019theoretical} we do not use the number of negatives $k$ to define the downstream classification tasks. Our setup yields a single $k$-independent downstream task, which allows us to study the influence of this parameter. In addition, many steps of our analysis are sharper than theirs, which leads to our new qualitative findings.

\vspace{-0.2cm}
\section{Setup and Algorithm}
\vspace{-0.3cm}
In this section, we present the mathematical framework that we use to study noise contrastive estimation, as well as the learning algorithm and the evaluation metric that we consider.
\label{sec:algo}

\paragraph{Notation.} Let the set of input features be denoted by $\Xcal$ and the set of latent classes by $\mathcal{C}$, where $|\mathcal{C}| = N$. With each latent class $c \in \mathcal{C}$ we will associate a distribution $\mathcal{D}_c$ over $\Xcal$. We can view $\mathcal{D}_c$ as the distribution over data conditioned on belonging to latent class $c$. We will also assume a distribution $\rho$ on $\mathcal{C}$. We will learn with a function class $\Fcal$, the set of representation functions $f: \Xcal \rightarrow \mathbb{R}^d$ that map the input to a $d$-dimensional vector. We use $\lesssim$ to denote less than or equal to up to constants.

\paragraph{NCE data.}
We assume access to {\it similar} data points in the form of pairs $(x, x^+)$ and $k$ {\it negative} data points $x^-_1, \ldots, x^-_k$. To formalize this, an unlabeled sample from $\Dcal_{\unsup}$ is generated as follows:
\begin{itemize}
    \item Choose a class $c$ according to $\rho$ and draw i.i.d. samples $x, x^{+}$ from $\Dcal_c$.
    \item Choose classes $c^{-}_1, \ldots, c^{-}_k$ i.i.d. from $\rho$ and draw $x^{-}_1 \sim \Dcal_{c^{-}_1}, \ldots, x^{-}_k \sim \Dcal_{c^{-}_k}$.
    \item Output tuple $(x, x^{+}, x^{-}_{1:k})$
\end{itemize}
This data generation is implicit. We only observe the data points and not the associated class labels. Observe that $c^{-}_i$ may be the same as $c$ for one or more $i \in \{1, \ldots, k\}$, causing \emph{collisions}. 

Empirically, similar pairs are typically generated from co-occurring data, for example, two parts of a document can be viewed as a similar pair~\citep{tosh2020contrastivea}. Note that our data generation process implies that $x$ and $x^+$ are independent conditioned on the latent class. This is a simplification of the real-world setting where the independence structure might not hold. However, it enables a transparent theoretical analysis that can provide insights that are applicable to some real-world setups. 

\paragraph{NCE algorithm.}
The objective of an NCE algorithm is to use the available unsupervised data to learn a representation that represents (or embeds) similar points in a similar manner and distinguishes them from the negative samples. The following NCE loss captures this objective,
\begin{definition}[NCE Loss]
The NCE loss for a representation $f$ on the distribution $\Dcal_{\unsup}$ is defined as
\begin{align*}
    \Lcal^{(k)}_{\unsup}(f) \defeq \underset{\Dcal_{\unsup}}{\EE}\left[\ell\left(\left\{f(x)^T\left(f(x^+) - f(x^{-}_i)\right)\right\}_{i=1}^k\right)\right].
\end{align*}
The empirical NCE loss with a set $S$ of $M$ samples drawn from $\Dcal_{\unsup}$ is,
\begin{align*}
    &\hat{\Lcal}^{(k)}_{\unsup}(f;S) \defeq \qquad\frac{1}{M} \sum_{(x, x^+, x_{1:k}^-) \in S}\ell\left(\left\{f(x)^T\left(f(x^+) - f(x^{-}_{i})\right)\right\}_{i=1}^k\right).
\end{align*}
\end{definition}
We restrict $\ell$ to be one of the two standard loss functions, hinge loss $\ell(v) = \max\{0, 1 + \max_i\{-v_i\}\}$ and logistic loss $\ell(v) = \log\left( 1 + \sum_{}\exp(-v_i)\right)$ for a vector $v \in \mathbb{R}^k$. Both loss functions and their variants are deployed routinely in practical NCE implementations~\citep{schroff2015facenet, chen2020simple}. 

The algorithm we analyze is the standard empirical risk minimization (ERM) on $\hat{\Lcal}^{(k)}_{\unsup}$ over $\Fcal$. For the rest of the paper, we will denote our learned representation $\hat{f} \in \arg \min_{f \in \Fcal}\hat{\Lcal}^{(k)}_{\unsup}(f)$ %

\paragraph{Downstream supervised learning task.}
For evaluating representations, we will consider the standard supervised learning task of classifying a data point into one of the classes in $\mathcal{C}$. More formally, a sample from $\Dcal_{\super}$ is generated as follows: (1) choose a class $c$ according to $\rho$ (2) draw sample $x$ from $\Dcal_c$, and (3) output sample $(x, c)$.

Observe that the distribution over classes and the conditional distribution over data points is the same as in the unsupervised data. This is crucial to be able to relate the NCE task to the downstream classification task. We will use the corresponding supervised loss for function $g: \Xcal \rightarrow \mathbb{R}^N$,
\begin{align*}
    \Lcal_{\sup}(g) = \underset{\Dcal_{\super}}{\EE}\left[\ell\left(\left\{g(x)_c - g(x)_{c'}\right\}_{c' \ne c}\right)\right]
\end{align*}
Note that~\citet{saunshi2019theoretical} consider a different downstream task which depends on the number of negative samples in the NCE task. Here we disentangle the number of negatives used in NCE from the number of downstream classes which results in a more natural downstream task. %

To evaluate the performance of the learned representation function $\hat{f}$, we will train a linear classifier $W \in \mathbb{R}^{N \times d}$ on top of the learned embedding and use it for the downstream task. We overload notation and define the supervised loss of an embedding $f$ as
\begin{align*}
    \Lcal_{\super}(f) = \inf_{W \in \mathbb{R}^{N \times d}} \Lcal_{\super}(Wf).
\end{align*}
We define the loss of the mean classifier as $\Lcal_{\super}^\mu(f) = \Lcal_{\super}(W^\mu f)$ where $W^\mu$ is such that the $c^{th}$ row is the mean $\mu_c = \EE_{x \sim \Dcal_c}[f(x)]$ of the representations of data from a fixed class $c$. Note that $\Lcal_{\super}(f) \le \Lcal_{\super}^\mu(f)$ for all $f$. Our guarantees will hold for the loss of the mean-classifier which will directly imply a bound on the overall supervised loss. Note that this analysis may be loose if the mean-classifier itself has poor downstream performance\footnote{ \citet{saunshi2019theoretical} present a simple example where the two losses are very different.}. 

\begin{wrapfigure}{r}{0.45\textwidth}
    \centering
    \vspace{-0.8cm}
    \includegraphics[width=0.9\textwidth]{./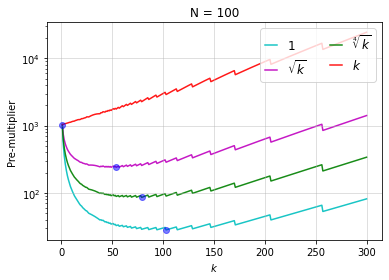}
    \caption{Value of pre-multiplier $\alpha(k,N)$ multiplied with the growth rates of $\Lcal^{(k)}_{\unsup} - \tau_k$ (on a log-scale) for $N=100$ with varying $k$ and growth rates. The plots represent the scaling of our upper bound on the supervised loss. The blue dots represent the optimal $k$ predicted by our theory.}
    \label{fig:varyL}
\end{wrapfigure}

\section{Theoretical Overview}
\label{sec:theory}
In this section, we present our main theoretical insights and overview our analysis, which provides sharp bounds on the supervised loss in terms of the NCE loss. 
Our bounds show an interesting trade-off between the number of negatives and the number of latent classes. For ease of presentation, the bounds in the main paper assume that $\rho$ is uniform over $\mathcal{C}$. 
Proofs and generalizations are deferred to \pref{app:non-uniform}.

In the supervised loss, data points are distinguished against all classes other than the class that the point belongs to. However, for the NCE loss, if the number of negatives $k$ is small relative to $N$, then the negatives may not cover all the classes simultaneously. On the other hand, if $k$ is too large, then it is quite likely that some of the negatives will belong to the same class as the ``anchor point'' $x$, causing collisions that force the learner to distinguish between similar examples. 
This leads to opposing factors when choosing the optimal number of negative samples. We present our main result that captures this trade-off.
\begin{theorem}[Transfer between NCE and supervised Loss]
\label{thm:transfer}
For any $f \in \Fcal$,
\begin{align*}
    &\Lcal_{\super}(f) \le \Lcal^\mu_{\super}(f) \le \underbrace{\frac{2}{1 - \tau_k}\left\lceil \frac{2(N-1) H_{N-1}}{k} \right\rceil}_{ \alpha(k,N)} \left(\Lcal^{(k)}_{\unsup}(f) - \tau_k\right),
\end{align*}
where $\tau_k = \Pr[\exists~i: c^-_i = c] = 1 - \left(1 - \frac{1}{N}\right)^k$ is the probability of a collision amongst the negatives and $H_t=\Theta(\log(t))$ is the $t^{\text{th}}$ harmonic number.
\end{theorem}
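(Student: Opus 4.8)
The inequality $\Lcal_{\super}(f)\le\Lcal^\mu_{\super}(f)$ is immediate, since $W^\mu$ is feasible in the infimum defining $\Lcal_{\super}(f)$; the work is to bound $\Lcal^\mu_{\super}(f)$ by the NCE loss. I would peel away, one at a time, the four discrepancies between the two objectives: (i) the positive $x^+$ is a sample, not the class mean $\mu_c$; (ii) each negative $x_i^-$ is a sample, not $\mu_{c_i^-}$; (iii) some negatives may collide with the anchor class $c$; and (iv) the $k$ negative classes need not cover all $N-1$ competitor classes that $\Lcal^\mu_{\super}$ scores against. Discrepancies (i) and (ii) disappear under Jensen's inequality: both the hinge and the logistic $\ell$ are convex, and $v_i=f(x)^\top(f(x^+)-f(x_i^-))$ is affine in $f(x^+)$ and in each $f(x_i^-)$, so conditioning on the latent classes (making the samples independent) and pushing the expectation inward gives
\begin{align*}
\Lcal^{(k)}_{\unsup}(f)\ \ge\ \EE\big[\ell\big(\{f(x)^\top(\mu_c-\mu_{c_i^-})\}_{i=1}^k\big)\big],
\end{align*}
the expectation being over $c\sim\rho$, $x\sim\Dcal_c$, and $c_{1:k}^-$ i.i.d.\ from $\rho$.

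For discrepancy (iii), split on the collision event $Q=\{\exists\,i:c_i^-=c\}$, $\Pr[Q]=\tau_k$. On $Q$, fix a colliding index $j$; using $\ell(v)\ge 1-v_j$ for the hinge loss (and $\ell(v)\ge\log(1+e^{-v_j})$ for the logistic loss), the fact that $\EE[v_j\mid Q,c,x]=f(x)^\top(\mu_c-\mu_c)=0$, and convexity, the collision part contributes at least $\tau_k$ (the constant is $1$ for hinge, with the analogous loss constant for logistic carried in the appendix). So $\Lcal^{(k)}_{\unsup}(f)-\tau_k\ge(1-\tau_k)\,\beta$ where $\beta:=\EE[\ell(\{f(x)^\top(\mu_c-\mu_{c_i^-})\}_i)\mid Q^c]$, and conditioned on $Q^c$ the negative classes are exactly i.i.d.\ uniform over the $N-1$ classes $\neq c$.

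The heart of the argument is discrepancy (iv): relating $\beta$, a loss scored against $k$ i.i.d.\ competitor classes, to the supervised integrand $s(c,x):=\ell(\{f(x)^\top(\mu_c-\mu_{c'})\}_{c'\ne c})$, scored against all $N-1$. I would replicate and coupon-collect: take $T:=\lceil 2(N-1)H_{N-1}/k\rceil$ independent no-collision batches of $k$ classes, all sharing the anchor $(c,x)$, and use that a sum of per-batch losses dominates the single loss over the pooled negatives (via $\prod_t(1+x_t)\ge 1+\sum_t x_t$ for the logistic loss, and nonnegativity of the per-batch hinge loss), so that $T\beta\ge\EE[\ell(\{f(x)^\top(\mu_c-\mu_{d_j})\}_{j=1}^{Tk})]$ with $d_1,\dots,d_{Tk}$ i.i.d.\ uniform on $\mathcal{C}\setminus\{c\}$. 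Since $Tk\ge 2(N-1)H_{N-1}$ and the coupon-collector time for $N-1$ coupons has mean $(N-1)H_{N-1}$, Markov's inequality gives $\Pr[\{d_j\}_j \text{ covers } \mathcal{C}\setminus\{c\}]\ge\tfrac12$ for every $c$; on that event the explicit form of $\ell$ (dropping repeated coordinates only decreases $\ell$) yields $\ell(\{f(x)^\top(\mu_c-\mu_{d_j})\}_j)\ge s(c,x)$. The coverage event depends only on $(c,d_{1:Tk})$ and is independent of $x$ given $c$, so $\EE[\mathbf{1}\{\text{cover}\}\,s(c,x)]\ge\tfrac12\,\EE[s(c,x)]=\tfrac12\Lcal^\mu_{\super}(f)$, hence $\beta\ge\tfrac{1}{2T}\Lcal^\mu_{\super}(f)$. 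Chaining everything gives $\Lcal^\mu_{\super}(f)\le\frac{2T}{1-\tau_k}\big(\Lcal^{(k)}_{\unsup}(f)-\tau_k\big)$, which is $\alpha(k,N)\big(\Lcal^{(k)}_{\unsup}(f)-\tau_k\big)$.

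I expect the replication-and-coverage step to be the main obstacle. One has to pick the right replication count, verify the pooling inequality for $\ell$ loses no constant, and---most delicately---decouple the coupon-collector coverage event from the random anchor so that Markov applies uniformly in $c$ and a clean factor $\tfrac12$ (not a loss- or $\log$-dependent factor) emerges; the term $\lceil 2(N-1)H_{N-1}/k\rceil$, i.e.\ the coverage half of the collision--coverage trade-off, comes out precisely from this counting. Together with the two sharp Jensen steps and the exact accounting of the collision mass $\tau_k$, this is what makes the bound tighter than that of \citet{saunshi2019theoretical}.
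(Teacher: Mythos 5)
Your proposal is correct and follows essentially the same route as the paper's proof: Jensen's inequality to replace samples by class means, splitting off the collision mass $\tau_k$ via sub-additivity of the loss, and then the replication trick with $\lceil 2(N-1)H_{N-1}/k\rceil$ copies combined with the coupon-collector bound and Markov's inequality to get coverage of all $N-1$ competitor classes with probability at least $1/2$. The only (cosmetic) difference is that you lower-bound the collision contribution by a single colliding coordinate rather than the full $\ell_{|I|}(0)$ term kept in the paper's general version, which you correctly flag as the hinge-vs-logistic constant issue.
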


Note that the above bound is similar to the one of \citet{saunshi2019theoretical}, except that the left hand side and the pre-multiplier $\alpha(k,N)$ are different. 
The left hand side here is the supervised loss on all classes rather than the loss, in expectation, on $k+1$-way classification task, where the tasks are drawn randomly from $\rho$. 
Further, our pre-multiplier incorporates the trade-off between $k$ and $N$. 
A key difference is that the bound in the prior work gets worse with increasing $k$ implying $k=1$ is the best choice whereas ours has a more subtle dependence on $k$.

\begin{remark}[Choosing $k$]The coefficient $\alpha(k,N)$ decreases with $k$ up to $k \approx \frac{1}{\log(N) - \log(N-1)}$ and then increases.
Note that $\Lcal_{\unsup} - \tau_k$ may itself increase as a function of $k$, however as long as its growth rate is $o(k)$, we will see a similar trend. (See \pref{fig:varyL}).
\end{remark}

We refine our bound by decomposing $\Lcal_{\unsup} - \tau_k$ into two terms that can potentially be small for large $\Fcal$.
\begin{theorem}[Refined NCE Loss Bound]\label{thm:split}
For any $f \in \Fcal$,
\begin{align*}
    \Lcal^\mu_{\super}(f) & \lesssim \alpha(k, N)\left(\Lcal^{(k)}_{\ne}(f) + \sqrt{\frac{k}{N}} s(f)\right)
\end{align*}
where $\Lcal^{(k)}_{\ne}(f)$ is the NCE loss obtained by ignoring the coordinates corresponding to the collisions and $s(f)$ is a measure of average intra-class deviation.
\end{theorem}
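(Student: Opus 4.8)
The plan is to start from Theorem~\ref{thm:transfer}, which already gives $\Lcal^\mu_{\super}(f) \le \alpha(k,N)\,(\Lcal^{(k)}_{\unsup}(f) - \tau_k)$, and to show that the quantity $\Lcal^{(k)}_{\unsup}(f) - \tau_k$ can be decomposed into a ``no-collision'' NCE loss plus an intra-class deviation term scaled by $\sqrt{k/N}$. Concretely, I would condition on the (random) set of collision coordinates $I = \{i : c_i^- = c\}$. On the event $I = \emptyset$ — which has probability $1-\tau_k$ — the per-sample loss is exactly the loss restricted to non-colliding coordinates, whose expectation I would call $\Lcal^{(k)}_{\ne}(f)$ (defining it as the expectation of $\ell$ over the tuple, but only over the coordinates $i$ with $c_i^- \neq c$; this requires a small amount of care because the number of surviving coordinates is itself random, but both loss functions $\ell$ are monotone in the number of coordinates, so dropping coordinates only decreases the loss). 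On the complementary event $I \neq \emptyset$, I would bound the contribution by comparing the loss to what it would be if the colliding coordinates were ``removed'': the difference is controlled by the terms $f(x)^\top(f(x^+) - f(x_i^-))$ for $i \in I$, where now both $x^+$ and $x_i^-$ are drawn from $\Dcal_c$. The expectation of such a term, after centering at the mean $\mu_c$, is what is captured by $s(f)$, the average intra-class deviation $\EE_c \EE_{x \sim \Dcal_c}\|f(x) - \mu_c\|$ (or an analogous second-moment quantity — the precise definition should be matched to what the constant in the $1$-Lipschitzness of $\ell$ demands).

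The key steps, in order, are: (i) write $\Lcal^{(k)}_{\unsup}(f) = \Pr[I=\emptyset]\,\EE[\ell \mid I=\emptyset] + \Pr[I\neq\emptyset]\,\EE[\ell \mid I\neq\emptyset]$ and observe $\Pr[I=\emptyset] = 1-\tau_k$, so that $\Lcal^{(k)}_{\unsup}(f) - \tau_k = (1-\tau_k)\EE[\ell\mid I=\emptyset] - \tau_k(1 - \EE[\ell \mid I\neq\emptyset])$; (ii) bound $(1-\tau_k)\EE[\ell\mid I=\emptyset] \le \Lcal^{(k)}_{\ne}(f)$, using monotonicity of $\ell$ in the set of coordinates to replace the conditional expectation by the unconditional no-collision loss; (iii) bound the residual term $\tau_k\big(\EE[\ell\mid I\neq\emptyset] - 1\big)$ — since $\ell$ takes values $\ge 0$ and, for the colliding coordinates, deviates from the ``baseline'' value in a manner controlled by $|f(x)^\top(f(x^+)-f(x_i^-))|$, use the $1$-Lipschitzness of $\ell$ (with respect to the relevant norm on its arguments) and Cauchy–Schwarz to bound the excess by $\EE[|I|] \cdot (\text{const})\cdot s(f)$; (iv) combine with $\EE[|I|] = k/N$ under uniform $\rho$, and note that $\tau_k$ absorbs into $\EE[|I|]$ so that the residual is at most $O(k/N)\, s(f)$ — then argue that after multiplication by $\alpha(k,N) = \Theta(\lceil N\log N / k\rceil/(1-\tau_k))$ this becomes $O(\sqrt{k/N})\,s(f)$, using $\tau_k \asymp \min\{1, k/N\}$ and the explicit form of $\alpha$. (The $\sqrt{k/N}$ rather than $k/N$ presumably arises because $\alpha(k,N)(k/N) \asymp (\log N)\sqrt{k/N}\cdot\sqrt{k/N}\cdot\sqrt{N/k}$-type bookkeeping, or because the intra-class term enters quadratically before a Jensen/Cauchy–Schwarz step; I would track the exponents carefully here rather than guess.)

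The main obstacle I anticipate is step (iii)–(iv): getting the precise power $\sqrt{k/N}$ on $s(f)$ rather than $k/N$ or $\sqrt{k}/N$. This requires being careful about (a) whether the Lipschitz bound on $\ell$ is with respect to the $\ell_\infty$ or $\ell_2$ norm of its argument vector (hinge uses $\max$, logistic uses a sum, so the two loss cases may need slightly different treatments and may produce different constants), (b) exactly how $s(f)$ is defined — first moment $\EE\|f(x)-\mu_c\|$ versus the square root of a variance — since a Jensen step trades one for the other, and (c) the interplay between $\EE[|I|]=k/N$ and the collision probability $\tau_k$ inside $\alpha(k,N)$. A secondary subtlety is step (ii): when $I \neq \emptyset$ the number of non-colliding coordinates is $k - |I| < k$, so ``the no-collision loss on $k$ coordinates'' is not literally what appears; I would define $\Lcal^{(k)}_{\ne}(f)$ as the expectation of $\ell$ applied to the surviving coordinates (of random cardinality $k-|I|$) and use monotonicity of both $\ell$'s under adding coordinates to get the clean inequality. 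I expect the bulk of the real work to be in making (a)–(c) precise; the conditioning decomposition itself is routine. Full details appear in \pref{app:non-uniform}.
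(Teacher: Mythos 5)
Your overall skeleton matches the paper's: split the NCE loss into a piece supported on the non-colliding coordinates (which becomes $\Lcal^{(k)}_{\ne}$) plus a residual from the colliding coordinates, and bound that residual by an intra-class deviation quantity. The paper does this per-sample via the sub-additivity of $\ell$ (Lemma~\ref{lem:loss}), i.e.\ $\ell(\{v_i\}_{i\in[k]}) \le \ell(\{v_i\}_{i\notin I}) + \ell(\{v_i\}_{i\in I})$, which is cleaner than your event-based conditioning: with your split, on the event $I\neq\emptyset$ the conditional loss $\EE[\ell\mid I\neq\emptyset]$ still contains the non-colliding coordinates, and your step (ii) only credits the $I=\emptyset$ samples to $\Lcal^{(k)}_{\ne}$, so some bookkeeping is off; but this is repairable and not the real issue.

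The genuine gap is in your steps (iii)--(iv): you do not have the mechanism that produces the $\sqrt{k/N}$ coefficient, and your primary guess for it is wrong. In the theorem, $\alpha(k,N)$ multiplies the entire bracket, including $\sqrt{k/N}\,s(f)$, so the square root cannot come from ``multiplication by $\alpha(k,N)$'' converting $k/N$ into $\sqrt{k/N}$ --- $\alpha$ cannot change the relative weight of the two terms inside the bracket. The correct mechanism is the paper's Lemma~\ref{lem:sf}: for a sample with $t=|I|$ collisions, Lemma~\ref{lem:losses} bounds the excess loss by $\EE\bigl[\bigl|\max_{i\in[t]} f(x)^\top(f(x_i^-)-f(x^+))\bigr|\bigr]$, and then Jensen plus bounding the squared max by the sum of squares gives $\sqrt{t}\,\EE_{x\sim D_c}[\|f(x)\|]\sqrt{\|\Sigma(f,c)\|_2}$ --- a $\sqrt{t}$ rather than linear-in-$t$ scaling, which is precisely the improvement over Lemma~A.1 of \citet{saunshi2019theoretical}. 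One then uses $\EE[\,|I|\mid I\neq\emptyset,c] = k\rho(c)/\tau_k(c)$ and Jensen again to get $\tau_k(c)\sqrt{\EE[\,|I|\mid I\neq\emptyset,c]} = \sqrt{k\rho(c)\tau_k(c)} \le \sqrt{k/N}$ in the uniform case. Your alternative guess (``the intra-class term enters quadratically before a Jensen/Cauchy--Schwarz step'') points at the right idea, but you explicitly decline to carry it out, and the linear-in-$|I|$ bound you do write down (residual $O(k/N)\,s(f)$) yields a different theorem --- essentially the Saunshi et al.\ version --- not the stated one. Since this $\sqrt{t}$ lemma is the main technical content of Theorem~\ref{thm:split}, the proposal as written does not establish the claimed bound.
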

We defer the formal definitions of $\Lcal^{(k)}_{\ne}(f)$ and $s(f)$ to \pref{app:split}. Note that the trade-off between $N$ and $k$ still holds here even for the coefficient in front of $s(f)$ (see \pref{fig:varyL} for $\sqrt{k}$). $\alpha(k,N)\sqrt{\frac{k}{N}}$ is minimized roughly at $k \approx \frac{0.5}{\log(N) - \log(N-1)}$. 

Lastly, we do a careful generalization analysis to guarantee closeness of the empirical NCE loss of $\hat{f}$ to the true NCE loss. Given a generalization guarantee, using the fact that $\hat{f}$ is the ERM solution, we obtain the following guarantee:

\begin{theorem}[Guarantee on ERM]
For $\hat{f} \in \arg \min_{f \in \Fcal}\hat{\Lcal}^{(k)}_{\unsup}(f)$ and any $f \in \Fcal$, 
 \begin{align*}
    \Lcal_{\super}(\hat{f}) & \lesssim \alpha(k, N)\left(\Lcal^{(k)}_{\ne}(f) + \sqrt{\frac{k}{N}} s(f) + \sqrt{k}\mathsf{Gen}_M \right)
\end{align*}
where $\mathsf{Gen}_M$ is a decreasing function of $M$ that depends on $\Fcal$ but is independent of $k$.
\end{theorem}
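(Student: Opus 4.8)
The plan is to chain the error-transfer bound of \pref{thm:transfer} with a generalization guarantee for the empirical NCE loss, exploiting that $\hat f$ is an empirical risk minimizer to replace $\hat f$ by an arbitrary comparator $f \in \Fcal$. Applying \pref{thm:transfer} to $\hat f$ gives $\Lcal_{\super}(\hat f) \le \alpha(k,N)\rbr{\Lcal^{(k)}_{\unsup}(\hat f) - \tau_k}$, so it suffices to control $\Lcal^{(k)}_{\unsup}(\hat f) - \tau_k$. Writing $\epsilon_{\mathrm{gen}} \defeq \sup_{f'\in\Fcal}\lvert\hat{\Lcal}^{(k)}_{\unsup}(f';S) - \Lcal^{(k)}_{\unsup}(f')\rvert$, the ERM property yields the usual three-step inequality $\Lcal^{(k)}_{\unsup}(\hat f) \le \hat{\Lcal}^{(k)}_{\unsup}(\hat f;S) + \epsilon_{\mathrm{gen}} \le \hat{\Lcal}^{(k)}_{\unsup}(f;S) + \epsilon_{\mathrm{gen}} \le \Lcal^{(k)}_{\unsup}(f) + 2\epsilon_{\mathrm{gen}}$ for every $f \in \Fcal$. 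Combining this with the decomposition established in the proof of \pref{thm:split}, namely $\Lcal^{(k)}_{\unsup}(f) - \tau_k \lesssim \Lcal^{(k)}_{\ne}(f) + \sqrt{k/N}\,s(f)$, reduces the theorem to proving $\epsilon_{\mathrm{gen}} \lesssim \sqrt{k}\,\mathsf{Gen}_M$ for some $\mathsf{Gen}_M$ that depends on $\Fcal$ and $M$ but not on $k$.

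For this uniform-convergence step I would first invoke the bounded-differences inequality: under a norm bound $\sup_{f\in\Fcal,\,x}\|f(x)\|\le R$, swapping one of the $M$ tuples perturbs $\hat{\Lcal}^{(k)}_{\unsup}$ by at most $B/M$, where $B = O(1 + R^2)$ for the hinge loss and $B = O(\log k + R^2)$ for the logistic loss. Symmetrization then bounds $\EE\,\epsilon_{\mathrm{gen}}$ by twice the Rademacher complexity of the composed class $\ell \circ \mathcal{H}_k$, where $\mathcal{H}_k$ collects the $\mathbb{R}^k$-valued maps $(x,x^+,x^-_{1:k})\mapsto \bigl(f(x)^\top(f(x^+)-f(x^-_i))\bigr)_{i=1}^k$, $f \in \Fcal$. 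Both admissible losses are $1$-Lipschitz with respect to the Euclidean norm on $\mathbb{R}^k$ (the hinge is $1$-Lipschitz in $\ell_\infty$ and hence in $\ell_2$; the logistic has $\|\nabla\ell\|_1\le 1$), so Maurer's vector-contraction lemma replaces this composed class by the scalar Rademacher average $\tfrac{1}{M}\,\EE_\sigma\sup_{f\in\Fcal}\sum_{m=1}^M\sum_{i=1}^k \sigma_{mi}\, f(x_m)^\top\bigl(f(x_m^+)-f(x_{m,i}^-)\bigr)$. Splitting off the anchor--positive term and using a sub-Gaussian comparison for the sign sums $\sum_{i}\sigma_{mi}$, together with the standard control of the anchor--negative term over its $Mk$ sign variables, shows that this average scales as $\Theta(\sqrt k)$ times a Rademacher average over $\Fcal$ of inner-product functions $\{(x,x')\mapsto f(x)^\top f(x')\}$ that does not involve $k$; this $k$-independent quantity is $\mathsf{Gen}_M$. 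Assembling the McDiarmid tail (whose $\log k$ from the logistic loss is $\le \sqrt k$ and folds into $\mathsf{Gen}_M$ along with the $\log(1/\delta)$ term), the symmetrization bound, and this $\sqrt k$ extraction gives $\epsilon_{\mathrm{gen}} \lesssim \sqrt k\,\mathsf{Gen}_M$ with high probability.

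Putting the pieces together --- the transfer bound, the ERM comparison, the decomposition behind \pref{thm:split}, and the generalization bound --- yields $\Lcal_{\super}(\hat f) \lesssim \alpha(k,N)\bigl(\Lcal^{(k)}_{\ne}(f) + \sqrt{k/N}\,s(f) + \sqrt k\,\mathsf{Gen}_M\bigr)$, which is the claim. The main obstacle is the last part of the generalization argument: ensuring that the \emph{only} surviving $k$-dependence is the explicit $\sqrt k$ prefactor. This forces the Euclidean-norm version of vector contraction (so that the Lipschitz constant of $\ell$ does not scale with $k$), the absorption of the $\log k$ range of the logistic loss into constants, and a careful peeling of the double Rademacher sum that extracts $\sqrt k$ rather than the naive $k$.
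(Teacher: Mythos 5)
Your overall skeleton is exactly the paper's: apply the transfer theorem to $\hat f$, use the three-step ERM inequality to pass to an arbitrary comparator $f$, plug in the decomposition behind \pref{thm:split}, and reduce everything to a uniform convergence bound of order $\sqrt{k}\,\mathsf{Gen}_M$. Where you genuinely diverge is in how that generalization lemma is proved. The paper composes at the level of the \emph{representation} outputs: it views the per-sample loss as $\phi \circ g_f$ with $g_f$ mapping the $(k+2)$-tuple of inputs to a $d(k+2)$-dimensional vector, shows $\phi$ is $O(B)$-Lipschitz in the $\ell_\infty$ norm on that space (via the bilinear bound $|w^\top\tilde w - z^\top\tilde z|\le 2B\|W-Z\|_\infty$ and \pref{lem:losses}), and then invokes the $\ell_\infty$ vector-contraction inequality of Foster--Rakhlin to get $\tilde O(B\sqrt{(k+2)d})\max_i\Rcal_M(\Fcal|_i)$; the $\sqrt{k}$ falls out of the output dimension. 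You instead contract at the level of the $k$ \emph{scores} using Maurer's $\ell_2$ vector contraction (the losses are indeed $1$-Lipschitz in $\ell_2$ as you argue), and then peel the resulting double Rademacher sum by hand. Your route buys a cleaner Lipschitz constant (no $B$ from the bilinearity at the contraction step) but pushes the difficulty elsewhere: your $\mathsf{Gen}_M$ is a Rademacher average of the \emph{product} class $\{(x,x')\mapsto f(x)^\top f(x')\}$, which still has to be related to the complexity of $\Fcal$ by a separate argument, whereas the paper lands directly on coordinate classes $\Fcal|_i$.

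One step of your peeling is not airtight as stated. The anchor--positive block is fine (the sign sums $\sum_i\sigma_{mi}$ have $\|\cdot\|_{2,1}$ norm $O(\sqrt k)$ and a standard comparison gives $\sqrt k$ times a $k$-free Rademacher average). But the anchor--negative block is a genuine Rademacher average over $Mk$ sign variables, and rescaling it gives $k\cdot\Rcal_{Mk}(\Gcal)$, not $\sqrt k\cdot\Rcal_M(\Gcal)$. The definition of worst-case Rademacher complexity alone only yields $\Rcal_{Mk}\le\Rcal_M$ (by splitting the $Mk$-sample into $k$ blocks), which leaves a factor of $k$ rather than $\sqrt k$. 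To recover $\sqrt k$ you need the additional structural fact $\Rcal_m(\Gcal)\lesssim C/\sqrt m$ so that $k\cdot\Rcal_{Mk}\lesssim\sqrt k\cdot C/\sqrt M$. That holds for the bounded classes one actually cares about, but it is an extra hypothesis your argument silently uses and the paper's route does not need; you should state it (or define $\mathsf{Gen}_M$ as $\sup_{m\ge M}\sqrt{m/M}\,\Rcal_m(\Gcal)$) to make the claim that the only surviving $k$-dependence is the explicit $\sqrt k$ prefactor.
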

See \pref{app:gen} for a formal definition of $\mathsf{Gen}_M$, which is based on the Rademacher complexity of $\Fcal$.

\subsection*{An example exhibiting the coverage-collision trade-off}
Our bounds suggest that there are two competing forces influenced by the choice of $k$ in the NCE task. The first is coverage of all the classes, which intuitively improves supervised performance, and suggests taking $k$ larger. On the other hand, when $k$ is large, the negatives are more likely to collide with the anchor class $c$, which will degrade performance. However, as the trends regarding the optimal choice of $k$ are based on analyzing an upper bound, 
there is always the concern that the upper bound is loose. To further substantiate our results, we now provide a simple example which demonstrates the existence of the coverage-collision trade-off and also verifies the sharpness of our transfer theorem.

\begin{figure*}
    \centering
    \begin{tikzpicture}
\draw[rounded corners,fill=blue] (-0.15,-0.6) rectangle (0.225,0.15);
\draw[rounded corners,fill=green] (0.325,-0.6) rectangle (0.7,0.15);
\draw[rounded corners,fill=red] (1.1,-0.6) rectangle (1.475,0.15);
\draw[rounded corners,fill=gray] (1.575,-0.6) rectangle (1.95,0.15);
\draw[rounded corners] (-0.25,-0.75) rectangle (0.8, 0.25);
\draw[rounded corners] (1.0,-0.75) rectangle (2.05, 0.25);
\node at (-0.6, 0.7) {$f_1:$};
\draw[->] (0.275,0.25) -- (0.275, 0.55);
\node at (0.275,0.7) {$e_1$};
\draw[->] (1.525,0.25) -- (1.525, 0.55);
\node at (1.525,0.7) {$e_2$};
\node at (-0.6, -1.55) {$f_2:$};
\draw[->] (0.0375,-0.6) -- (0.0375,-1.0);
\draw[->] (0.5125,-0.6) -- (0.5125,-1.0);
\draw[->] (1.2875,-0.6) -- (1.2875,-1.0);
\draw[->] (1.7625,-0.6) -- (1.7625,-1.0);
\node[rotate=270] at (0.0375,-1.7) {$(1\pm\epsilon)e_1$};
\node[rotate=270] at (0.5125,-1.7) {$(1\pm\epsilon)e_2$};
\node[rotate=270] at (1.2875,-1.7) {$(1\pm\epsilon)e_3$};
\node[rotate=270] at (1.7625,-1.7) {$(1\pm\epsilon)e_4$};

\draw (2.3,-.25) circle [radius=0.08];
\draw (2.5,-.25) circle [radius=0.08];
\draw (2.7,-.25) circle [radius=0.08];
\end{tikzpicture}\hspace{0.1cm}
    \includegraphics[width=0.37\textwidth]{./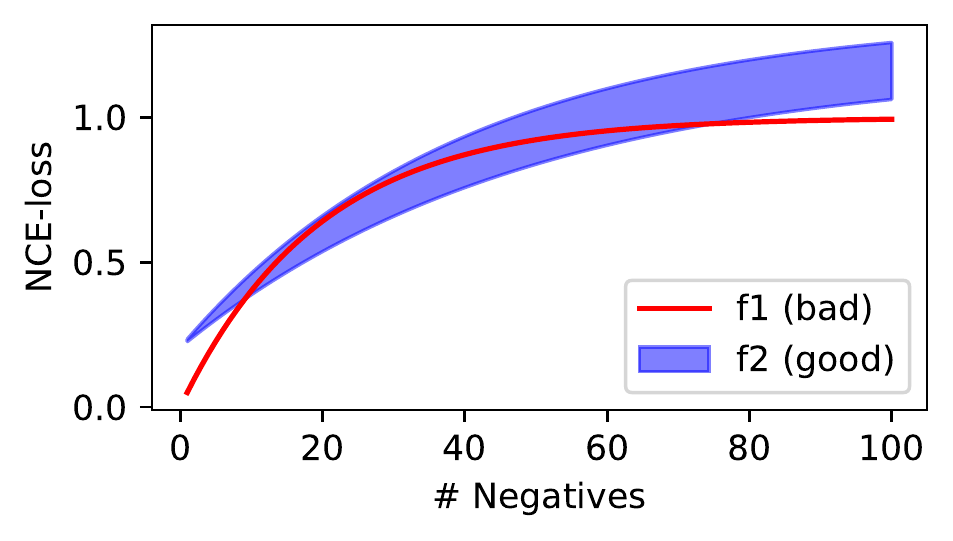}
    \includegraphics[width=0.37\textwidth]{./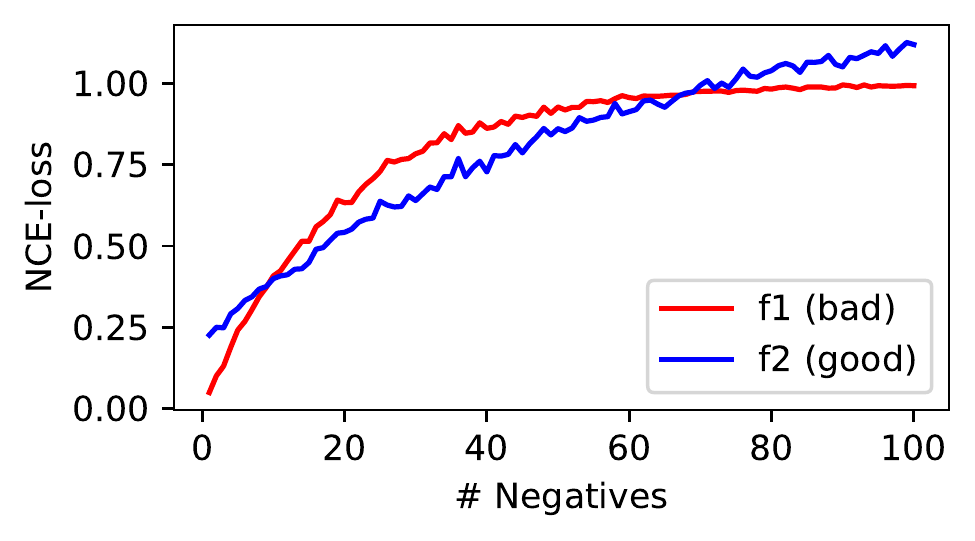}
    \caption{\textbf{Left}: Schematic of the distribution with the two representation functions, where colors correspond to distinct classes. $f_1$ collapses pairs of classes together, while $f_2$ separates all classes, but has some intraclass variance in its embeddings. \textbf{Center}: plot of $\Lcal_{\unsup}^{(k)}(f_1)$ and the bounds on $\Lcal_{\unsup}^{(k)}(f_2)$ as a function of $k$. \textbf{Right}: Monte-carlo simulations of $\Lcal_{\unsup}^{(k)}$ for the two representations, demonstrating both cross points. In both plots we take $N=40,\epsilon=0.35$.}
    \label{fig:example}
\end{figure*}

First we showcase the tightness of~\pref{thm:transfer} and the importance of coverage. Consider a problem with $N$ classes $\{1,\ldots,N\}$ where $N$ is a multiple of $2$ and let $\rho$ be the uniform distribution. For each class $c$, the distribution $D_c$ is the uniform distribution on two examples $x_1^c$ and $x_2^c$. The representation $f_1$ pairs classes together, so that all points from class $2i$ and $2i+1$ are both mapped to the $i^{\textrm{th}}$ standard basis element $e_i$ (see~\pref{fig:example}).

It is straightforward to verify the following facts for the hinge loss (see Appendix for detailed calculations): (1) since $f_1$ collapses classes together, it has $\Lcal_{\sup}^\mu(f_1) = 1$, (2) $\Lcal_{\unsup}(f_1) = 1 - (1-2/N)^k$ since for each class $2i$ the NCE loss will be $1$ if and only if either $2i$ or $2i+1$ appear in the negatives (otherwise the loss will be $0$). Thus, for this function $f_1$ we have
\begin{align*}
    \frac{\Lcal_{\unsup}^{(k)}(f_1) - \tau_k}{1-\tau_k} = 1 - \left(1-\frac{1}{N-1}\right)^k \leq \frac{k}{N-1},
\end{align*}
while $\Lcal_{\sup}^\mu(f_1) = 1$. This verifies that~\pref{thm:transfer} is tight up to constants and the $\Theta(\log(N-1))$ factor arising from the $N-1$-st harmonic number. 

In addition, observe that $\Lcal_{\unsup}^{(k)}(f_1)$ increases rapidly with $k$, since when $k$ is large it is very likely that either the anchor class or its pair will appear in the negatives. This demonstrates the importance of coverage, since when $k$ is large, $f_1$, which has high downstream loss, is unlikely to be selected by the NCE procedure.

To make this precise, consider another function $f_2$ such that $f_2(x_1^i) = (1+\epsilon)e_i$ and $f_2(x_2^i) = (1-\epsilon)e_i$. This representation correctly separates all of the classes, but there is some noise in the embedding space. By direct calculation, we have $\Lcal_{\sup}^\mu(f_2) = \epsilon/2$ so that $f_2$ should be preferred for downstream performance. We can also show that
\begin{align*}
    0 \leq \Lcal_{\unsup}^{(k)}(f_2) - (1-\tau_k)\left(\frac{\epsilon^2}{4}+\frac{\epsilon}{2}\right)-\tau_k \leq \tau_k\epsilon.
\end{align*}
See~\pref{fig:example}, where we plot a sharper lower bound that we establish in the appendix. For moderately small $\epsilon$, say $\epsilon = 0.1$ these bounds imply that $\Lcal_{\unsup}^{(k)}(f_1) < \Lcal_{\unsup}^{(k)}(f_2)$ when $k/N$ is sufficiently small ($k\lessapprox 0.05N$), which means that the bad representation $f_1$ will be selected.  On the other hand when $k \gtrapprox 0.05N$, the better representation $f_2$ will be selected. This demonstrates how using many negatives can yield representations with better downstream performance. %

In the asymptotic regime where $k \to \infty$, we observe a second important phenomena with this example: If $k/N$ is too large, then the bad classifier $f_1$ will be favored again. 
To see this, note that when there are many collisions $f_2$ will incur $1+\epsilon$ NCE error, so $\lim_{k \to \infty} \Lcal_{\unsup}^{(k)}(f_2) = 1+\epsilon$. On the other, it is clear that $\lim_{k \to \infty} \Lcal_{\unsup}^{(k)}(f_1) = 1$, so $f_1$ will be selected. This phenomena can be seen in~\pref{fig:example}. In this regime, the collisions in the NCE problem compromise the supervision signal and lead to the selection of a suboptimal representation.

\section{Experiments}
\label{sec:experiments}

We test our theoretical findings on two domains, an NLP task and a vision task. We first describe our general learning setting and then describe these tasks and results.

\paragraph{Experimental setting.}
We train the encoder model $f$ by performing NCE with logistic loss. 
Following~\citet{saunshi2019theoretical},
we train on the NCE objective for a fixed number of epochs and do not use a held-out set for model selection. 
We train a linear classifier on the downstream task for a fixed number of epochs and save the model after each epoch. We do not fine-tune $f$ during downstream training. %

Computing $k$ negative tokens for a batch size $B$ results in $\Ocal(Bk)$ forward passes through the model, which is prohibitively expensive for large values of $B$ and $k$. In order to reduce computational overhead, we follow the data reuse trick used in literature~\citep{chen2020simple,chuang2020debiased}. Given $B$ pairs of positively aligned, semantically similar points $\{(x_i, x^+_i)\}_{i=1}^B$, we compute $\{f(x_i), f(x^+_i)\}_{i=1}^B$ using $2B$ forward passes. For each $i \in [B]$, we treat the set $\Ccal_i = \{x_j, x^+_j \mid j \in [B], j \ne i \}$ as the candidate set of negative samples to pick from for the positive pair $(x_i, x^+_i)$. Previous work use the entire candidate set $\Ccal_i$, which gives $k = |\Ccal_i| = 2(B-1)$. While this allows most negative samples to be used, it results in a confounding affect, due to the entanglement of $B$ and $k$.  Instead, for a fixed value of $k \le 2(B-1)$, we sample $k$ negative examples without replacement from $\Ccal_i$ independently for each $i$. This sampling protocol marks a notable departure from prior work by \emph{decoupling the dependence between batch size and number of negative examples} and studying their influence on performance separately.

\begin{figure}[t]
\centering
\includegraphics[scale=0.43]{./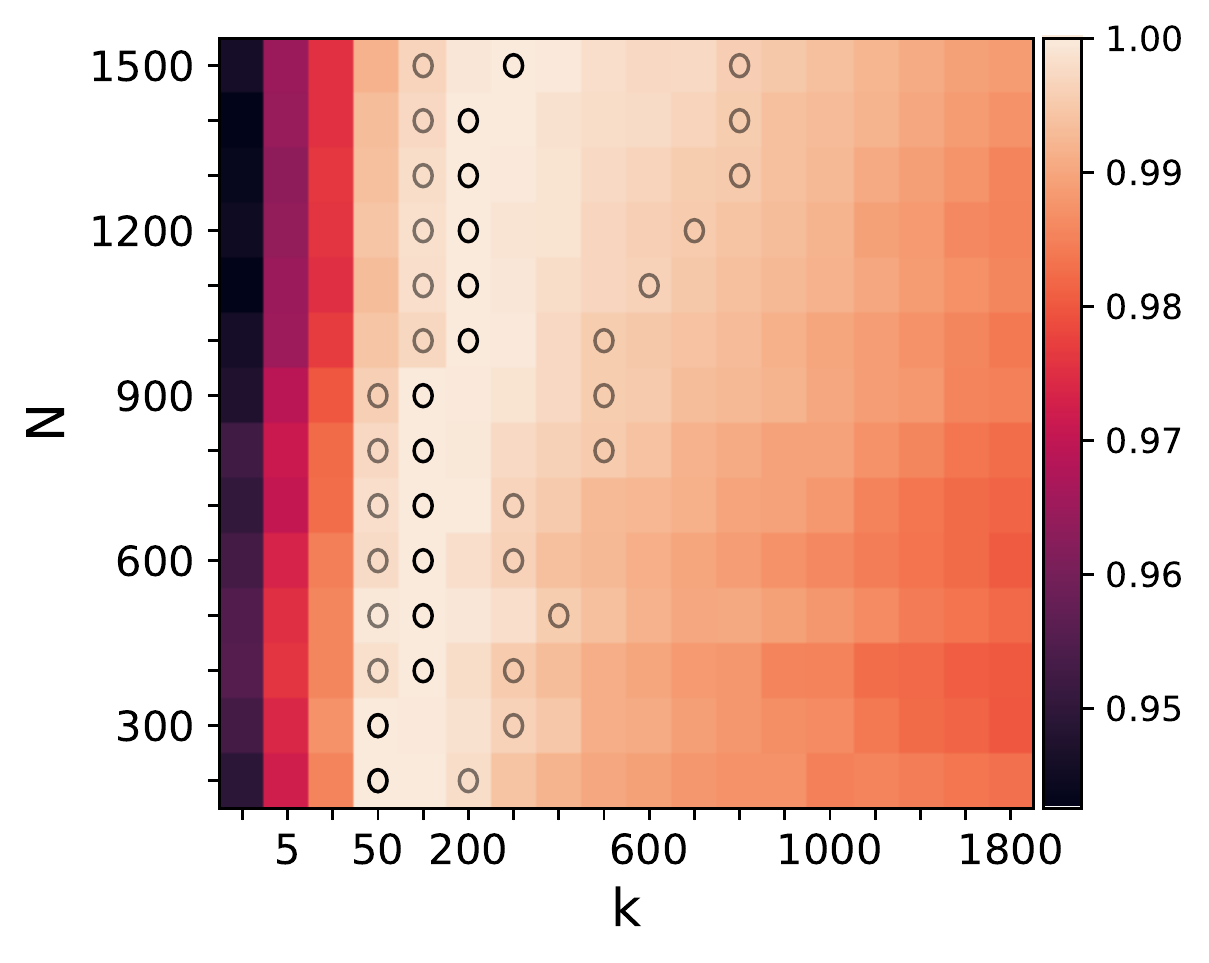}\hspace{-0.2cm}
\includegraphics[scale=0.43]{./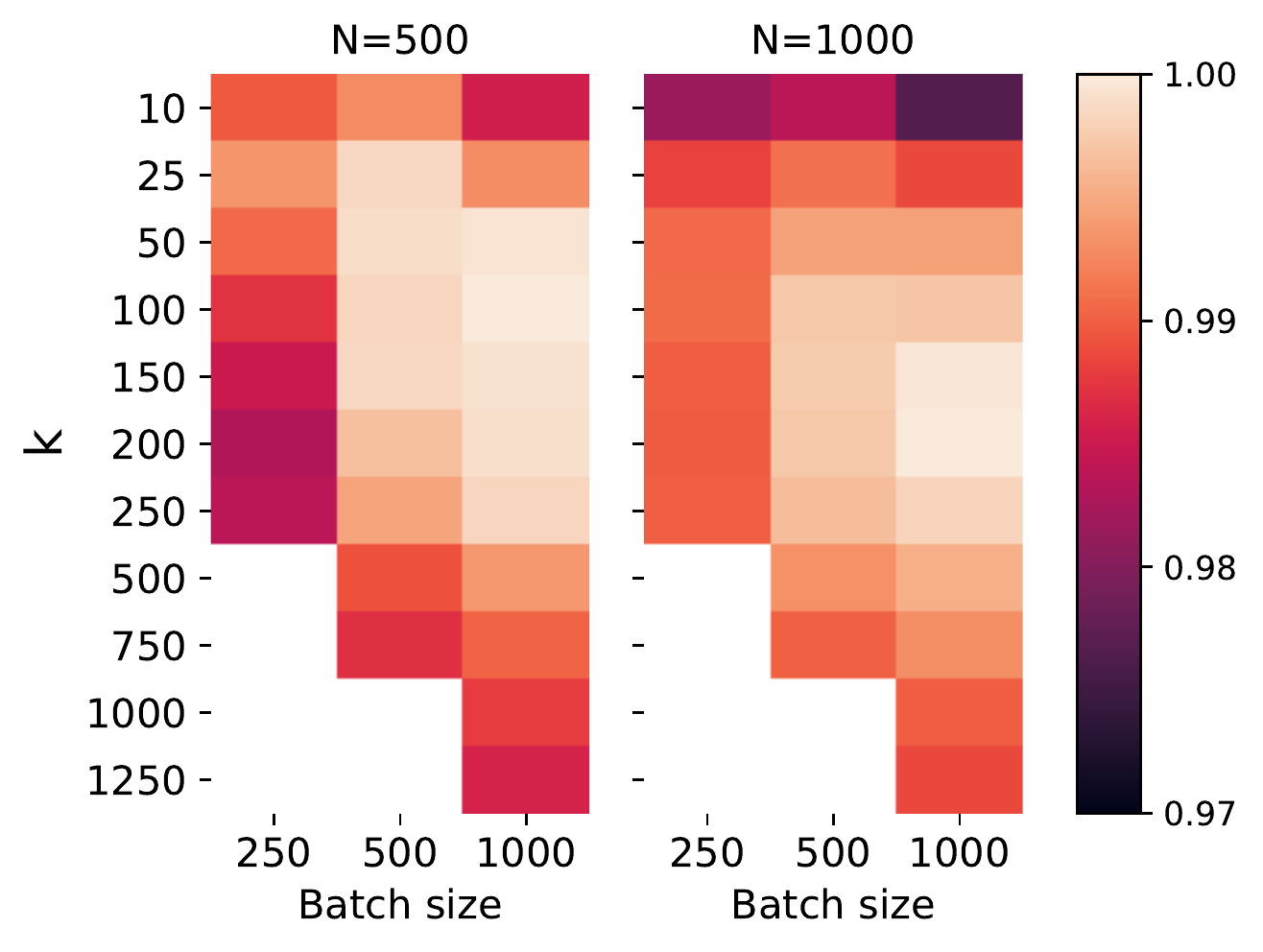}\hspace{-0.2cm}
\includegraphics[scale=0.43]{./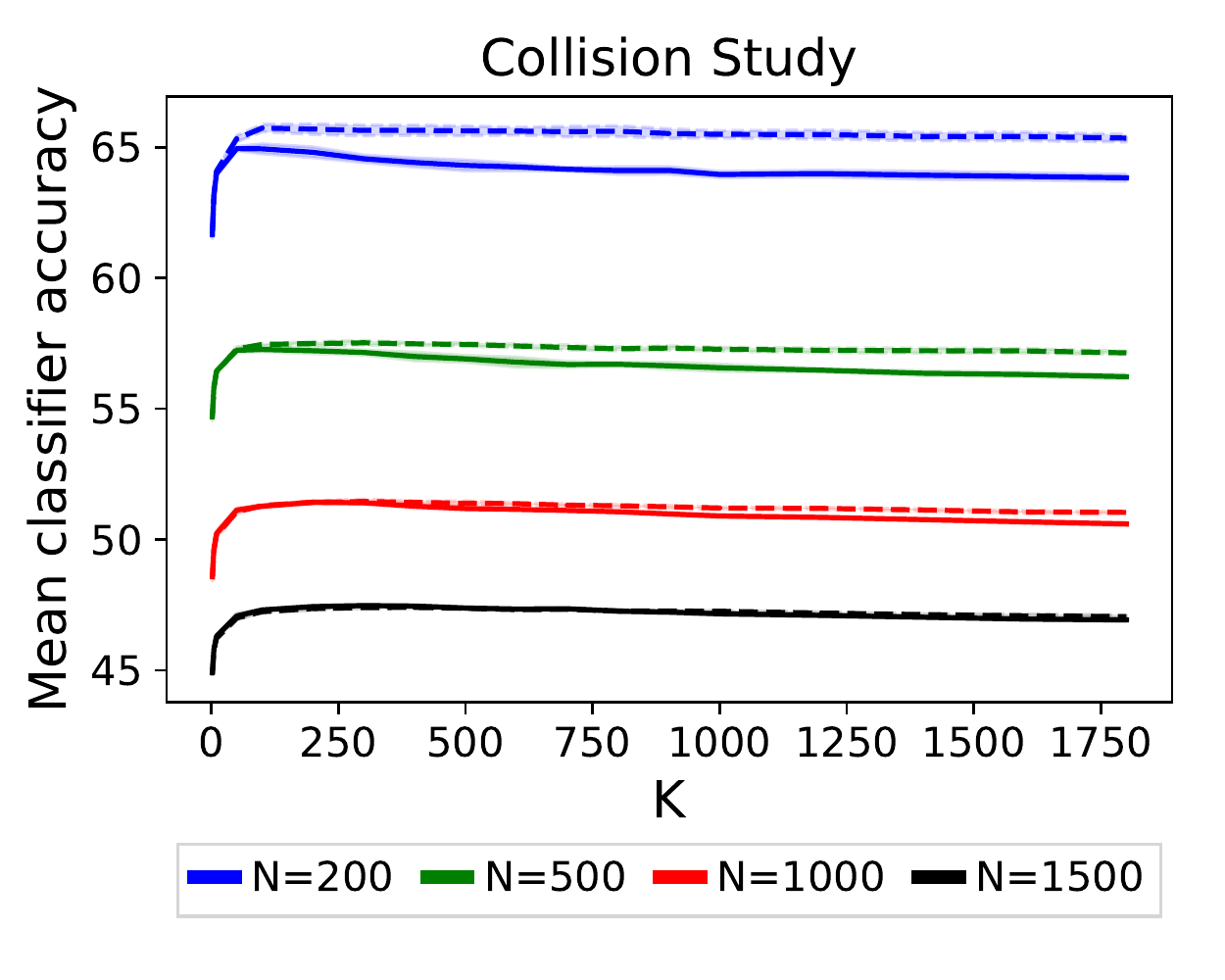}
\caption{Mean classifier accuracy on the \textbf{Wiki-3029} dataset. \textbf{Left:} Downstream performance as a function of $N$ and $k$ for a fixed batch size of $B=1000$. Each row is locally normalized by dividing the unnormalized accuracy in a given row with the best performance in that row. Color denotes normalized average accuracy across 5 trials where lighter areas have higher performance. Solid black circles highlight $k$ with the best performance for each row. Shaded black circles show the smallest and largest value of $k$ with normalized performance of at least $0.995$. 
\textbf{Center:} Mean classifier average accuracy across 5 trials with varying $B$ and $K$. The performance is globally normalized by dividing the unnormalized accuracy by the overall best accuracy. 
\textbf{Right:} An experiment highlighting the effect of disabling collisions across $k$ for 4 values of $N$. Solid lines indicate vanilla training, and dotted lines indicate no collision.}
\label{fig:nlp-exp}
\end{figure}

\paragraph{NLP experiments.} Here we perform a thorough analysis on the Wiki-3029 dataset of~\cite{saunshi2019theoretical}. This dataset consists of 200 sentences taken from 3029 Wikipedia articles. The downstream task consists of predicting the article identity from a given sentence. 
A key advantage of this dataset is that it contains many classes, which allows us to study learning behavior as the number of classes $N$ increases. 
For a given $N \le 3029$, for both training and test datasets, we remove sentences from all but the first $N$ articles. 
We tokenize sentences using the NLTK English word tokenizer~\citep{BirdKleinLoper09}, and replace all tokens that occur less than 50 times in the training set, with token ${\tt UNK}$ denoting \emph{an unknown token}. 
This gives us a vocabulary size $V$ of 15,807 with at most 136 unique tokens per sentence.
We create our training set by randomly sampling 80\% of the sentences from each article. The remaining 20\% of the sentences for each article form the test set.

We use a bag-of-words representation for modeling $f$. Formally, we map a sentence $x$ to a list of tuples $\{(t_i, p_i)\}$ where $\{t_i\}$ is the set of unique tokens in $x$ and $p_i$ is the number of times $t_i$ occurs in $x$ normalized by the number of tokens in $x$. We compute $f(x) = \sum_{i} p_i \nu_{t_i}$ where $\nu \in \RR^{V \times d}$ is a word-embedding matrix with embedding dimension $d$, and $\nu_{t}$ is the $t^{th}$ row denoting the word embedding of token $t$. We initialize the word embedding matrix randomly and train it using NCE. %

We train the NCE model for a fixed number of epochs and do not use a held-out set, following the protocol of~\cite{saunshi2019theoretical}. We measure both the downstream performance of the mean classifier $W^\mu$ and a trained linear classifier. For the former, we use the entire training dataset to compute $W^\mu$ using the trained representation. For the latter, we use the entire training dataset, with labels, to fit a linear layer on top of the model $f$, and we do not fine-tune $f$. For each setting, we repeat the experiment 5 times with different seeds. Please see~\pref{app:experiment-details} for full details on hyperparameters values and optimization details.

The leftmost plot in \pref{fig:nlp-exp} shows the accuracy of the mean classifier as we vary $N$ and $k$. 
For a given number of classes $N$, performance improves as $k$ increases and then slowly decays for larger values of $k$. 
We show the general trend in this figure and report exact numbers to the appendix. 
Further, we find that the optimal choice of $k$, indicated by solid black circles, slowly increases as $N$ increases. 
Both of these empirical findings support the theoretical predictions of our improved error transfer bounds. 
As we have discussed, the previous theoretical results~\citep{saunshi2019theoretical} do not predict this observed trend.
Lastly, we observe that performance improves very rapidly as $k$ increases, up until $k=50$, 
while the decay in performance as $k$ increases beyond the optimal value is relatively mild in comparison to these initial gains.\looseness=-1

The center plot in~\pref{fig:nlp-exp} shows the mean classifier's performance as we vary $B$ and $k$ for two values of $N$. 
Note that for a given value of $B$, we can only increase $k$ up to $2(B-1)$. 
We observe that increasing the batch size can improve the performance even when the number of negative examples are fixed. 
Optimal performance is obtained for the largest batch size and an intermediate value of $k$. 
This highlights the very different roles played by $B$ and $k$ on this dataset. 
We note that these results could not have been observed by previous work that tie the value of $k$ to $B$.

\begin{figure}[t]
\hspace{-0.8cm}
\includegraphics[trim={0cm, 0cm, 0cm, 0}, clip, scale=0.58]{./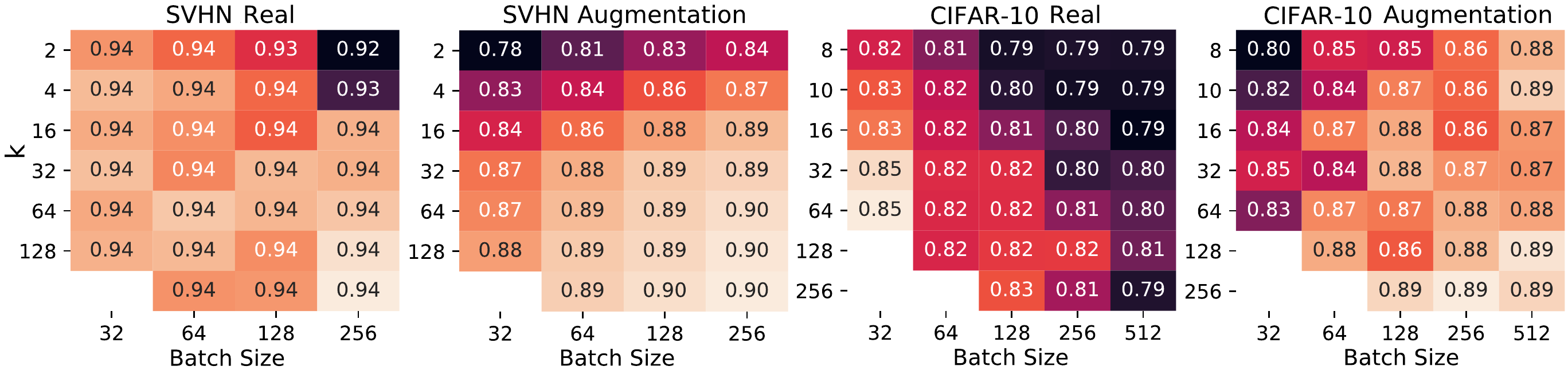}
\caption{Average downstream classifier accuracy as a function of both batch size and $k$ for various vision datasets and NCE parameters. \textbf{Left}: SVHN data, using a true positive sample from the same class, as used in our analysis. \textbf{Center Left}: SVHN data, but using data augmentation. Here the positive sample is an augmented version of the reference sample, and negatives are augmented as well. \textbf{Center Right}: NCE on CIFAR-10 data, using real positive examples. \textbf{Right:} CIFAR-10 data with data augmentation.}
\label{fig:visionExp}
\end{figure}

Our theoretical analysis reveals a coverage-collision trade-off and suggests that the poor performance when $k$ is very large is due to many collision between the negative samples and the anchor/positive, i.e., \emph{false negatives}.
To empirically verify this claim, we modify the representation learning procedure by sampling
$k$ negative examples only from those members in the candidate set that do not share the same class as the positive reference point.\footnote{If we are unable to find $k$ true negatives for any example in the batch, then no update is performed on this batch. However, due to the choice of $N, B, k$ this never occurred in our experiments.} 
In the rightmost plot of~\pref{fig:nlp-exp}, we compare this sampling procedure with the standard one for different values of $N$ and $k$ and a fixed $B=1000$. 
We observe that artificially removing collisions improves the performance across all values of $N$; however, the effect is more significant for smaller values of $N$ ($1.5\%$ for $N=200$ and $K=1000$). 
That the effect is more significant for small $N$ is to be expected, since the probability of collision decreases rapidly with $N$, so the different sampling procedures are quite similar for large $N$.
We can further notice that performance decays only slightly for very large values of $k$ in the absence of collisions. 
This evidence supports the collision hypothesis, namely that collisions are a central reason for performance degradation with increasing $k$. 
This observation was also made by~\citet{chuang2020debiased}. 
Lastly, as hypothesized by prior work~\citep{chen2020simple}, we believe a mild reduction in accuracy with large values of $k$ can happen even in the absence of collision, due to instability in optimization

\paragraph{Vision experiments.}

In this section, we decouple batch size from $k$ and measure downstream performance as a function of these parameters on the SVHN and CIFAR-10 datasets. 
We fit all parameters except the last layer of a ResNet-18 model using the NCE loss, and we evaluate downstream accuracy by training the last layer only in a fully supervised fashion. 
We use a learning rate of $10^{-3}$, train for 400 epochs, and fit parameters with the Adam optimizer.\looseness=-1

We consider two ways of sampling tuples of images for the NCE objective. 
In the first, we adhere to our theoretical framework and use class information to capture semantic similarity. Here, for a given sample, we use an image from the same class to act as a positive, and we do not use data augmentation. 
In the second, we deviate from our theory and do not use class information. Instead, we follow a common empirical approach and use an augmented version of the anchor point. 
Negative samples are also augmented. 
For these experiments, we exercise the SimCLR data augmentation scheme, consisting of randomly applied crops, color jitter, horizontal flips, greyscaling, and blurring~\citep{chen2020simple}.

We experiment with both SVHN and CIFAR-10 image datasets, with \pref{fig:visionExp} summarizing our results. 
When using augmentation, the trend shown in previous work is observed, with accuracy tending to improve with increasing batch size and $k$. 
When using class information for sampling positives, however, the performance trends are somewhat unexpected. %
Downstream SVHN accuracy in this setting is remarkably consistent, with nearly all combinations of $k$ and batch size producing roughly equivalent, high-quality representations for downstream classification. 
In the CIFAR-10 case, this is not true---in fact, performance tends to increase as batch size decreases. This is likely due to the exclusion of data augmentation, which is necessary for obtaining reasonable performance even in the fully supervised regime for these architectures. In its absence, the variance induced by decreasing the batch size appears to offer a regularization benefit that improves accuracy. Still, overall performance without augmentation is poor in comparison to data-augmented NCE.

These experiments raise an interesting conjecture: perhaps the observation that downstream performance increases with $k$ and batch size can be attributed to the kinds of data augmentation techniques that have been refined specifically for vision, rather than to the NCE objective itself. 
In NLP, where there are less clear-cut ways of performing data augmentation, we see trends that more closely agree with our theory. 
Either way, these experiments point to a distortion between the theoretical framework typically used to study NCE (using true positives) and, in the vision case, what is observed in practice. 
The core of this distortion appears to involve the use of augmentation techniques that have been highly tuned for object recognition/computer vision tasks.

\section{Discussion}
\label{sec:discussion}
In this paper, we study how the number of negative samples $k$ used in contrastive representation learning affects the quality of learned representations. 
Our theoretical results highlight a tension between covering the concepts in the data and avoiding collisions, and suggest that the number of negatives should scale roughly linearly with the number of concepts. Our NLP experiments confirm this trade-off whereas our vision experiments show a more entangled dependence on other factors such as inductive bias of the architecture and the use of data augmentation. 

This article therefore raises a number of interesting directions for future work, spanning both theory and practice, and calls for a more detailed investigation to better understand the success of NCE in different domains. We close with a few of these directions and related points of discussion:

\paragraph{Data augmentation.} As alluded to by our vision experiments, the use of data augmentation in practice seems to be a principal factor in the gap between what is observed empirically and what we describe theoretically. Because NCE is so commonly used with image data, developing a theoretical model that can incorporate properties of commonly used augmentations and the inductive bias of networks is an important avenue for future research.

\paragraph{Choosing $k$ in practice.} 
When the NCE distribution and the downstream task are closely linked, our theoretical results indicate that the number of negatives used in NCE should scale with the number of classes. In practice, however, choosing $k$ is less clear, with the optimal choice being a function of complex factors. The model's inductive bias, the choice of data augmentation used, and the data type all seem to influence the ideal value. Given this, are there algorithmic principles for setting $k$ in practice?

\paragraph{Decoupling negatives with other parameters.} This work disentangles $k$ from the batch size used for model training, with our experiments suggesting that each plays a different role in the quality of the learned representation. Modern NCE pipelines have several conflating parameters, and we believe that a deeper empirical study is imperative for disambiguating and understanding the role of each.

\subsection*{Acknowledgements}
We thank Nikunj Saunshi for helpful discussions regarding the experimental aspects of this work. We also thank Remi Tachet des Combes and Philip Bachman for interesting discussions and for providing references to relevant work.

\bibliography{refs}
\bibliographystyle{abbrvnat}

\newpage

\newpage
\appendix
\section*{Appendix}

The appendix is organized as follows:~\pref{app:useful-facts}-\ref{app:calculations-section-4.1}
present proofs of our theoretical results.~\pref{app:experiment-details} presents experimental details including additional experiments and hyperparameter values.

\section{Useful Facts}
\label{app:useful-facts}
 The standard multi-class hinge loss $\ell(v) = \max\{0, 1 + \max_i\{-v_i\}\}$ and the logistic loss $\ell(v) = \log \left( 1 + \sum_i \exp(-v_i)\right)$, for vector $v \in \mathbb{R}^k$, satisfy the following lemma:
\begin{lemma}[Sub-additivity of losses] \label{lem:loss}
Let $v \in \RR^k$ be any vector. For all $I_1, I_2 \subset [k]$, with $S = I_1 \cup I_2$, we have that
\[
\ell(\{v_i\}_{i \in I_1}) \le \ell(\{v_i\}_{i \in S}) \le \ell(\{v_i\}_{i \in I_1}) + \ell(\{v_i\}_{i \in I_2}).
\]
\end{lemma}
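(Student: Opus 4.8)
The plan is to verify both inequalities separately for the two admissible losses, exploiting a common structure: in each case $\ell(\{v_i\}_{i\in I})$ is obtained by applying a monotone scalar function to an ``aggregate'' of the coordinates $\{-v_i\}_{i\in I}$ that is itself monotone and sub-additive under unions of index sets. Concretely, write the hinge loss as $\ell(\{v_i\}_{i\in I})=\max\{0,\,1+m(I)\}$ with $m(I)=\max_{i\in I}\{-v_i\}$, and the logistic loss as $\ell(\{v_i\}_{i\in I})=\log(1+s(I))$ with $s(I)=\sum_{i\in I}\exp(-v_i)$, using the conventions $m(\emptyset)=-\infty$ and $s(\emptyset)=0$ so that both losses vanish on the empty index set (this is needed so the statement is meaningful when, e.g., $I_2=\emptyset$).

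For the left inequality (monotonicity), I would simply note that $I_1\subseteq S$ gives $m(I_1)\le m(S)$ and $s(I_1)\le s(S)$ — the latter because each summand $\exp(-v_i)$ is non-negative — and that $t\mapsto\max\{0,1+t\}$ and $t\mapsto\log(1+t)$ are non-decreasing; composing yields $\ell(\{v_i\}_{i\in I_1})\le\ell(\{v_i\}_{i\in S})$ in both cases.

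For the right inequality (sub-additivity), I would split by loss. For the hinge loss, observe $m(S)=\max\{m(I_1),m(I_2)\}$; assuming without loss of generality (by symmetry of the claim in $I_1,I_2$) that $m(I_1)\ge m(I_2)$, the left-hand side equals $\max\{0,1+m(I_1)\}=\ell(\{v_i\}_{i\in I_1})$, which is at most $\ell(\{v_i\}_{i\in I_1})+\ell(\{v_i\}_{i\in I_2})$ since $\ell\ge 0$. For the logistic loss, $S=I_1\cup I_2$ gives $s(S)\le s(I_1)+s(I_2)$, so setting $p=s(I_1)\ge 0$, $q=s(I_2)\ge 0$ and using monotonicity of $\log(1+\cdot)$ it suffices to show $\log(1+p+q)\le\log(1+p)+\log(1+q)$; this follows by exponentiating, since $(1+p)(1+q)=1+p+q+pq\ge 1+p+q$.

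The argument is elementary, so I do not anticipate a genuine obstacle; the only points requiring mild care are the empty-index-set conventions above and the fact that both outer functions are non-decreasing and non-negative, which is exactly what makes the non-negativity step in the hinge case go through. The one substantive inequality is the logistic bound $(1+p)(1+q)\ge 1+p+q$ for $p,q\ge 0$.
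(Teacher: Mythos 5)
Your proof is correct. The paper actually states this lemma as a ``useful fact'' without providing any proof, so there is no argument to compare against; your verification is the natural one and it is complete. Both halves check out: monotonicity of $t\mapsto\max\{0,1+t\}$ and $t\mapsto\log(1+t)$ composed with monotonicity of the inner aggregate gives the left inequality, non-negativity of the hinge loss plus $m(S)=\max\{m(I_1),m(I_2)\}$ gives sub-additivity in the hinge case, and $(1+p)(1+q)\ge 1+p+q$ for $p,q\ge 0$ gives it in the logistic case (your $s(S)\le s(I_1)+s(I_2)$ also correctly covers overlapping $I_1,I_2$). Your attention to the empty-index-set convention is a point the paper glosses over but does implicitly rely on (e.g.\ when it sets the no-collision loss to zero on an empty set), so it is worth making explicit as you do.
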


\begin{lemma}[Coupon Collector]\label{lem:coupon}
Consider the coupon collector problem with $n$ coupons, each with probability $p_i$. In each trial we collect a coupon by sampling from the distribution $(p_1,\ldots,p_n)$. Let $M$ denote the minimum number of trials until all coupons have been collected. Then:
\begin{align*}
\EE[M] = \int_0^{\infty}\left(1 - \prod_{i=1}^n(1 - \exp(-p_it)\right) dt \le \frac{H_n}{\min_i p_i},
\end{align*}
where $H_n$ is the $n$-th harmonic number. Furthermore, using Markov's inequality, 
\begin{align*}
    \Pr\sbr{ M \geq \frac{2H_n}{\min_{i} p_i}} \leq \EE\sbr{M} \cdot \frac{\min_{i}p_i}{2H_n} \leq 1/2.
\end{align*}
\end{lemma}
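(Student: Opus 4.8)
The plan is to establish the three assertions in turn. For the exact formula for $\EE[M]$, I would Poissonize the sampling process. Since $(p_1,\dots,p_n)$ is a probability distribution, run a rate-$1$ Poisson process and let the $m$-th trial occur at its $m$-th event time $S_m = W_1+\cdots+W_m$, with $W_j \overset{\mathrm{iid}}{\sim}\mathrm{Exp}(1)$ drawn independently of the coupon labels. Thinning the superposed process by coupon type makes the arrival streams of distinct coupons independent, so the first time coupon $i$ is seen is $E_i \sim \mathrm{Exp}(p_i)$ with the $E_i$ mutually independent. The instant all coupons are collected is $\tilde M := \max_i E_i = S_M$, and $M$ is a stopping time for the filtration generated by the coupon labels alone, hence independent of the i.i.d.\ increments $W_j$; a crude union bound shows $\EE[M]<\infty$. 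Wald's identity then yields $\EE[\tilde M] = \EE[M]\,\EE[W_1] = \EE[M]$, while independence of the $E_i$ together with the layer-cake formula give
\[
\EE[\tilde M] = \int_0^\infty \Pr[\max_i E_i > t]\,dt = \int_0^\infty\Bigl(1 - \prod_{i=1}^n(1-e^{-p_i t})\Bigr)dt ,
\]
which is the claimed identity.

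For the upper bound I would use monotonicity in the $p_i$'s: since $p_i \ge p_{\min}:=\min_i p_i$ for every $i$, we have $1-\prod_i(1-e^{-p_it}) \le 1-(1-e^{-p_{\min}t})^n$ pointwise in $t$. Integrating and substituting $u=e^{-p_{\min}t}$ reduces everything to the elementary identity $\int_0^1 \frac{1-(1-u)^n}{u}\,du = \sum_{j=0}^{n-1}\int_0^1(1-u)^j\,du = \sum_{j=1}^n \frac1j = H_n$, which gives $\EE[M] \le H_n/p_{\min}$. The tail bound is then immediate from Markov's inequality applied to the nonnegative random variable $M$: $\Pr[M \ge 2H_n/p_{\min}] \le \EE[M]\cdot \tfrac{p_{\min}}{2H_n} \le \tfrac12$.

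I expect the first step to be the main obstacle: the clean product form $\prod_i(1-e^{-p_it})$ genuinely relies on the independence that Poissonization buys---in the discrete process a trial reveals exactly one coupon, so the completion probabilities do not factor---and one must check the stopping-time and integrability hypotheses needed for Wald's identity. A purely discrete alternative is to expand $\Pr[M>t]$ by inclusion--exclusion over subsets of coupons and sum the resulting geometric series in $t$, but then the alternating signs obstruct the comparison $p_i \ge p_{\min}$ that makes the upper bound clean, so the Poissonization route seems the more economical one.
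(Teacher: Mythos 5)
Your proof is correct. The paper states this lemma as a known fact and gives no proof of its own, but the integral formula in the statement is exactly $\EE[\max_i E_i]$ for independent $E_i \sim \mathrm{Exp}(p_i)$, so the Poissonization/Wald derivation you give is precisely the argument the statement presupposes, and your subsequent steps (pointwise comparison with the equal-probability case $p_{\min}$, the substitution $u = e^{-p_{\min}t}$ yielding $H_n$, and Markov) are all sound.
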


\begin{lemma}\label{lem:losses}
For any vectors $v, w \in \mathbb{R}^t$,
\begin{align*}
\ell(v) - \ell(w) \le \left|\max_i (w_i -v_i)\right|.
\end{align*}
\end{lemma}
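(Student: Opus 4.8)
The plan is to prove the inequality separately for the hinge loss and the logistic loss, but to organize both arguments around the single structural fact that each of these losses is \emph{coordinate-wise non-increasing}. Write $\delta := \max_i(w_i - v_i)$, so the right-hand side is $|\delta|$, and split on the sign of $\delta$.

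If $\delta \le 0$, then $v_i \ge w_i$ for every $i$, so coordinate-wise monotonicity gives $\ell(v) \le \ell(w) \le \ell(w) + |\delta|$, and we are done. The substantive case is $\delta > 0$. Here the definition of $\delta$ gives $w_i - \delta \le v_i$ for every $i$, so monotonicity yields $\ell(v) \le \ell(w - \delta \mathbf{1})$, where $\mathbf{1}$ is the all-ones vector. It then remains to prove the ``uniform downward shift'' estimate $\ell(w - \delta \mathbf{1}) \le \ell(w) + \delta$ for $\delta \ge 0$. For the hinge loss this reduces, with $a := 1 + \max_i(-w_i)$, to the elementary inequality $\max\{0, a + \delta\} \le \max\{0, a\} + \delta$ (check the two cases $a + \delta \le 0$ and $a + \delta > 0$). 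For the logistic loss it follows by factoring $e^\delta$ out of the sum: $\ell(w - \delta \mathbf{1}) = \log\bigl(1 + e^\delta \textstyle\sum_i e^{-w_i}\bigr) \le \log\bigl(e^\delta + e^\delta \sum_i e^{-w_i}\bigr) = \delta + \ell(w)$, using $e^\delta \ge 1$ and monotonicity of $\log$. Chaining $\ell(v) \le \ell(w - \delta\mathbf{1}) \le \ell(w) + \delta = \ell(w) + |\delta|$ finishes the case and hence the lemma.

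I do not expect a genuine obstacle; the only subtlety worth flagging is not to throw away the sharpness of the statement. Both losses are $1$-Lipschitz with respect to $\|\cdot\|_\infty$, which would only give the weaker bound $\|v-w\|_\infty$ on the right-hand side, whereas the lemma charges only for the one worst coordinate in which $w$ exceeds $v$. The entire gain comes from using coordinate-wise monotonicity to replace $v$ by $w - \delta\mathbf{1}$ \emph{before} invoking the shift estimate, so the cleanest write-up keeps these two facts — monotonicity in each coordinate and the additive response of $\ell$ to a uniform downward shift — explicit and does the rest by the short case analysis above.
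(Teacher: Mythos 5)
Your proof is correct and uses essentially the same ingredients as the paper's: the same elementary shift bounds $\max\{0,a+\delta\}\le\max\{0,a\}+\delta$ for the hinge loss and $\log(1+e^{\delta}S)\le\delta+\log(1+S)$ for the logistic loss, with the paper absorbing the sign of $\delta=\max_i(w_i-v_i)$ into the absolute value directly rather than via your explicit case split and monotonicity step. No gap; the reorganization around coordinate-wise monotonicity plus a uniform shift is just a cleaner packaging of the same argument.
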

\begin{proof}
For hinge loss,
\begin{align*}
\ell(v) &= \max\left(0, 1 + \max_i ((w_i - v_i) - w_i)\right)\\
&\le \max\left(0, 1 + \max_i (-w_i) + \max_i(w_i - v_i)\right)\\ 
&\le \max\left(0, 1 + \max_i (-w_i)\right) + \left|\max_i(w_i - v_i)\right| = \ell(w) + \left|\max_i(w_i - v_i)\right|.
\end{align*}
For logistic loss,
\begin{align*}
\ell(v)  &= \log\left(1 + \sum_i \exp(-v_i)\right)\\
&=  \log\left(1 + \sum_i \exp(-w_i)\exp(w_i - v_i)\right)\\
&\le \log\left(1 + \exp\left(|\max_i(w_i - v_i)|\right)\sum_i \exp(-w_i)\right)\\
&\le |\max_i(w_i - v_i)| + \log\left(1 + \sum_i \exp(-w_i)\right) = \ell(w) + \left|\max_i(w_i - v_i)\right|. 
\end{align*}
The last inequality holds because $1 \leq \exp(|\max_i (w_i - v_i)|)$. 
\end{proof}

\section{General Results with Non-uniform Probabilities and Proofs}
\label{app:non-uniform}

In this section, we state and prove a more general version of~\pref{thm:transfer} and~\pref{thm:split} which can accommodate a non-uniform class distribution $\rho$. The particular results in the paper can be obtained by setting all class probabilities as $1/N$.

\paragraph{Notation.} For a drawn sample, denote the collisions by $I(c, c^-_1, \ldots, c^-_{k}) = \{i \in [k] | c = c^-_i \}$. Let $\tau_k(c) = \Pr_{c^-_i\sim \rho^k}[I(c, c^-_1, \ldots, c^-_{k}) \ne \emptyset] = 1 - (1 - \rho(c))^k$ denote the probability of seeing class $c$ in the negative samples (i.e., the collision probability). Also, let $\tau_k = \sum_{c' \in \Ccal}\rho(c')\tau_k(c') = 1 - \sum_{c' \in \Ccal}\rho(c')(1-\rho(c'))^k$. 
We will drop the arguments of $I$ when it is clear from the context.

\subsection{General Transfer Theorem}
Let us first start with the general transfer theorem.
\begin{theorem}[General Transfer Bound] For all $f \in \Fcal$, we have
\begin{align*}
    \Lcal^{\super}(f) \le \frac{2\max\left(1, \frac{ 2(1 - \rho_{\min})H_{N-1}}{k\rho_{\min}}\right)}{\left(1 - \rho_{\max}\right)^k}\left(\Lcal^{(k)}_{\unsup}(f) - \tau_k \underset{c, c^-_i \sim \rho^{k+1}}{\EE} \left[\ell_{|I|}(0) \middle| I \ne \emptyset\right]\right),
\end{align*}
where $\rho_{\min} = \min_c \rho(c)$, $\rho_{\max} = \max_c \rho(c)$ and $H_t$ is the $t^{\text{th}}$ harmonic number.
\end{theorem}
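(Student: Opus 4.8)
The argument factors into two essentially independent halves that are multiplied at the end. \emph{Half one} rewrites the parenthesized quantity: I will show
\[
\Lcal^{(k)}_{\unsup}(f) - \tau_k\,\underset{c,c^-_i\sim\rho^{k+1}}{\EE}\!\left[\ell_{|I|}(0)\;\middle|\;I\ne\emptyset\right] \;\ge\; (1-\rho_{\max})^k\,\bar{\Lcal}(f),
\]
where $\bar{\Lcal}(f)$ denotes the NCE loss \emph{conditioned on the collision-free event} $I=\emptyset$ (so that all $k$ negative coordinates are genuine negatives). \emph{Half two} is a coverage bound, $\Lcal^{\super}(f)\le\Lcal^{\mu}_{\super}(f)\le 2T\,\bar{\Lcal}(f)$ with $T=\big\lceil 2(1-\rho_{\min})H_{N-1}/(k\rho_{\min})\big\rceil$. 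Combining the two and using $\Lcal^{\super}(f)\le\Lcal^{\mu}_{\super}(f)$ gives the claim (a slightly finer count of a partial last block recovers the cleaner $\max(1,\cdot)$ printed in the statement).

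\textbf{Half one (collision accounting).} Condition on the anchor class $c$ and the realized collision set $I=\{i\in[k]:c^-_i=c\}$, and abbreviate $v_i=f(x)^\top(f(x^+)-f(x^-_i))$. Splitting on whether $I=\emptyset$,
\[
\Lcal^{(k)}_{\unsup}(f)=\underset{c}{\EE}\Big[(1-\tau_k(c))\,\EE\!\left[\ell(\{v_i\}_{i\in[k]})\,\middle|\,c,I=\emptyset\right]+\tau_k(c)\,\EE\!\left[\ell(\{v_i\}_{i\in[k]})\,\middle|\,c,I\ne\emptyset\right]\Big].
\]
For the collision term, sub-additivity of $\ell$ (\pref{lem:loss}) gives $\ell(\{v_i\}_{i\in[k]})\ge\ell(\{v_i\}_{i\in I})$; moreover, since $x$, $x^+$ and every $x^-_i$ with $i\in I$ are i.i.d.\ from $\Dcal_c$, we have $\EE[v_i\mid c,I,x]=f(x)^\top(\mu_c-\mu_c)=0$ for each $i\in I$, so Jensen applied to the convex $\ell$ yields $\EE[\ell(\{v_i\}_{i\in I})\mid c,I,x]\ge\ell_{|I|}(0)$. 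Averaging over $x$, then over $I$, then over $c$ turns $\EE_c[\tau_k(c)\,\EE[\ell_{|I|}(0)\mid c,I\ne\emptyset]]$ into exactly $\tau_k\,\EE_{c,c^-}[\ell_{|I|}(0)\mid I\ne\emptyset]$ (by the definition $\tau_k=\EE_c[\tau_k(c)]$ and the tower rule), so after subtracting this term what remains is at least $\EE_c[(1-\tau_k(c))\,\EE[\ell(\{v_i\}_{i\in[k]})\mid c,I=\emptyset]]$, which is at least $(1-\rho_{\max})^k\bar{\Lcal}(f)$ since $1-\tau_k(c)=(1-\rho(c))^k\ge(1-\rho_{\max})^k$.

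\textbf{Half two (coverage).} First, $\mu_c=\EE_{x^+\sim\Dcal_c}f(x^+)$ and $\mu_{c'}=\EE_{x^-_{c'}\sim\Dcal_{c'}}f(x^-_{c'})$, and $\ell(\{f(x)^\top(f(x^+)-f(x^-_{c'}))\}_{c'\ne c})$ is a convex function of an affine map of these embeddings, so Jensen gives $\Lcal^{\mu}_{\super}(f)\le\EE_{c,x,x^+,\{x^-_{c'}\}_{c'\ne c}}[\ell(\{f(x)^\top(f(x^+)-f(x^-_{c'}))\}_{c'\ne c})]$. Now fix $c,x,x^+$ and draw $T$ independent blocks of $k$ negatives, with the classes in each block i.i.d.\ from $\rho$ conditioned on not equaling $c$; let $E$ be the event that every class $c'\ne c$ appears in some block. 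By \pref{lem:coupon} applied to the $N-1$ coupons with probabilities $\rho(c')/(1-\rho(c))$, the expected number of draws to hit all of them is at most $(1-\rho(c))H_{N-1}/\rho_{\min}\le(1-\rho_{\min})H_{N-1}/\rho_{\min}$, so Markov shows $Tk$ draws make $\Pr[E]\ge\tfrac12$ with the stated $T$ (for large $k$ a single block already suffices, which forces $T\ge1$). On $E$, pick one representative negative for each $c'\ne c$; the resulting loss is $\ell$ of a genuine sub-vector of the $Tk$-coordinate loss vector, so sub-additivity (\pref{lem:loss}) bounds it pointwise by $\sum_{t=1}^T\ell(\text{block }t)$. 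Conditioning additionally on all block class-labels, the chosen representatives sit at fixed coordinates and are independent with the $c'$-th one distributed as $\Dcal_{c'}$, so their conditional expected loss equals precisely the supervised integrand at $(c,x,x^+)$; taking expectations and dropping $\mathbf{1}_E$ (using $\ell\ge0$) gives $\Pr[E]\cdot(\text{supervised integrand})\le T\cdot(\text{collision-free NCE integrand at }c,x,x^+)$. Dividing by $\Pr[E]\ge\tfrac12$ and integrating over $c,x,x^+$ yields $\Lcal^{\mu}_{\super}(f)\le 2T\,\bar{\Lcal}(f)$.

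\textbf{Main obstacle.} The technical heart is the coupling in Half two: one must exhibit a \emph{single} selection rule for the per-class representatives that simultaneously (i) keeps the pointwise sub-additive domination $\ell(\text{chosen coordinates})\le\sum_t\ell(\text{block }t)$ on $E$, and (ii) has the property that, conditioned on $E$ together with all block class-labels, the selected representatives are genuinely independent $\Dcal_{c'}$ draws, so that their expected loss reproduces the supervised loss and not some reweighted variant of it. Everything else---the two Jensen steps, the mean-zero computation for the collision coordinates, and threading $\rho_{\min}$, $\rho_{\max}$ and the integer block count through the constants so that the coefficient matches $2\max(1,2(1-\rho_{\min})H_{N-1}/(k\rho_{\min}))/(1-\rho_{\max})^k$---is routine bookkeeping.
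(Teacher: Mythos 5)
Your proposal is correct and follows essentially the same route as the paper's proof: split the NCE loss on the collision event, lower-bound the colliding coordinates by $\ell_{|I|}(0)$ using their zero conditional mean, and handle coverage by taking $T$ independent blocks of collision-free negatives, combining them via sub-additivity, and invoking the coupon-collector bound with Markov's inequality. The only substantive difference is that the paper applies Jensen's inequality to replace $f(x^+)$ and $f(x_i^-)$ by the class means $\mu_c,\mu_{c'}$ at the very outset, which makes the loss on the coverage event a deterministic function of the set of covered classes and thereby dissolves the representative-selection coupling you flag as your main obstacle.
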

\begin{proof}
The first step is to apply Jensen's inequality to get,
\begin{align*}
    \Lcal^{(k)}_{\unsup}(f) &= \underset{c, c_i^{-} \sim \rho^{k+1}}{\EE}\underset{\substack{x, x^{+} \sim D_{c}^2\\x^{-}_i \sim D_{c^{-}_i}}}{\EE}\left[ \ell\left(\left\{f(x)^\top \left(f(x^{+}) - f(x^{-}_i\right)\right\}\right) \right]\\
    &\geq \underset{\substack{c,c^-_i \sim \rho^{k+1}\\ x \sim D_{c}}}{\EE}\left[ \ell\left(\left\{f(x)^\top \left(\mu_{c} - \mu_{c^-_i}\right)\right\}\right) \right].
\end{align*}
Introducing the collision function $I$, we have
\begin{align*}
\Lcal^{(k)}_{\unsup}(f)&\ge \underset{x \sim D}{\EE}\underset{c^-_i \sim \rho^k}{\EE}\left[ \ell\left(\left\{f(x)^\top \left(\mu_{c} - \mu_{c^-_i}\right)\right\}\right) \right]\\
&\ge \underset{(x,c) \sim D}{\EE}\left[(1 - \tau_k(c))\underset{c^-_i \sim \rho^k}{\EE} \left[ \ell\left(\left\{f(x)^\top \left(\mu_{c} - \mu_{c^-_i} \right)\right\}\right) \middle| I = \phi, c \right]\right] +  \tau_k\underset{c, c^-_i \sim \rho^{k+1}}{\EE} [\ell_{|I|}(0)| I \ne \emptyset]
\end{align*}
Here $\ell_t(0)$ is equal to $\ell$ evaluated at a $t$-dimensional 0 vector, which we obtain $\ell_{|I|}(0)$ by using the sub-additivity property in~\pref{lem:loss}. Note that for the second term, conditional on $I$, the distribution of $c$ is not $\rho$ in general. We deal with the second term in the next section, and focus now on the first term.
\begin{lemma}
For any $c \in \Ccal$ and $x$, we have,
\begin{align*}
\underset{c^-_i \sim \rho^k}{\EE} \left[ \ell\left(\{f(x)^\top \left(\mu_{c} - \mu_{c^-_i} \right)\}\right) \middle| I = \emptyset, c \right] \ge \frac{1}{2\left\lceil\frac{ 2(1 - \rho(c))H_{N-1}}{k\min_{c' \ne c}\rho(c')}\right\rceil}\cdot \ell\left(\{f(x)^\top \left(\mu_{c} - \mu_{c'} \right)\}_{c' \in \Ccal \setminus c}\right),
\end{align*}
where $H_N$ is the $N^{\textrm{th}}$ Harmonic number. 
\end{lemma}
\begin{proof}
We can view the conditioning on $I = \emptyset$ and $c$ as selecting $c^{-}_i$ from $\rho_{-c}$ with support on all $c' \ne c$ and $\rho_{-c}(c') = \frac{\rho(c')}{1 - \rho(c)}$. Therefore, we have
\begin{align*}
\underset{c^-_i \sim \rho^k}{\EE}&\left[ \ell\left(\{f(x)^\top \left(\mu_{c} - \mu_{c^-_i} \right)\}\right) \middle| I = \emptyset, c \right] \\
&\labelrel={in:neg} \underset{c^-_i \sim \rho_{-c}^k}{\EE} \left[ \ell\left(\{f(x)^\top \left(\mu_{c} - \mu_{c^-_i} \right)\}\right) \right] \\
&\labelrel={in:mcopies} \frac{1}{m}\sum_{j=1}^m\underset{c^-_{ij} \sim \rho_{-c}^k}{\EE} \left[ \ell\left(\{f(x)^\top \left(\mu_{c} - \mu_{c^-_{ij}} \right)\}\right) \right]\\
&= \frac{1}{m}\underset{c^-_{ij} \sim \rho_{-c}^{k \times m}}{\EE} \left[ \sum_{j=1}^m \ell\left(\{f(x)^\top \left(\mu_{c} - \mu_{c^-_{ij}} \right)\}\right) \right]\\
& \labelrel\ge{in:sub-add} \frac{1}{m}\underset{c^-_{ij} \sim \rho_{-c}^{k \times m}}{\EE} \left[\ell\left(\{f(x)^\top \left(\mu_{c} - \mu_{c^\prime} \right)\}_{c^\prime \in \cup \{c^{-}_{ij}\}}\right) \right]\\
&\labelrel\ge{in:pos} \frac{1}{m}\underset{c^-_{ij} \sim \rho_{-c}^{k \times m}}{\EE} \left[ \one\left[\cup \{c^{-}_{ij}\} = \Ccal \setminus \{c\}\right]\ell\left(\{f(x)^\top \left(\mu_{c} - \mu_{c^\prime} \right)\}_{c^\prime \in \Ccal \setminus c}\right) \right]\\
&= \frac{1}{m}\cdot \ell\left(\{f(x)^\top \left(\mu_{c} - \mu_{c^\prime} \right)\}_{c^\prime \in \Ccal \setminus c}\right) \cdot \underset{c^-_{ij} \sim \rho_{-c}^{k \times m}}{\Pr} \left[\cup \{c^{-}_{ij}\} = \Ccal \setminus \{c\}\right].
\end{align*}
We replace the distribution over negatives, as described above to get \eqref{in:neg}. Then we construct $m$ copies of this expression and take an average to get \eqref{in:mcopies}, which is a key step in the argument. Next, \eqref{in:sub-add} uses~\pref{lem:loss}, which allows us to replace the sum over the $m$ losses with a single multi-class loss on the union of the labels. Inequality \eqref{in:pos} uses the non-negativity of the loss and finally we observe that the loss expression no longer depends on the samples $c_{ij}^-$. 

To bound the probability term, note that it can be viewed as an instance of the standard coupon collector problem with $N- 1$ different coupons with non-identical probabilities of selection given by $\rho_{-c}$. In this instance, we perform $mk$ trials. Therefore using~\pref{lem:coupon}, if we take $mk \ge 2\frac{ (1 - \rho(c))H_{N-1}}{\min_{c' \ne c}\rho(c')}$ boxes, we should cover all coupons with probability at least $1/2$. This implies taking 
$m = \left\lceil\frac{ 2(1 - \rho(c))H_{N-1}}{k\min_{c' \ne c}\rho(c')}\right\rceil$
, we get that the probability of hitting all elements in $\Ccal \setminus\{c\}$ is at least $1/2$, which yields the desired result.
\end{proof}

Substituting back, for the first term, we get,
\begin{align*}
    &\underset{(x,c) \sim D}{\EE}\left[(1 - \tau_k(c))\underset{c^-_i \sim \rho^k}{\EE} \left[ \ell\left(\left\{f(x)^\top \left(\mu_{c} - \mu_{c^-_i} \right)\right\}\right) \middle| I = \phi, c \right]\right]\\
    &\ge \underset{(x,c) \sim D}{\EE}\left[\frac{1 - \tau_k(c)}{2\left\lceil\frac{ 2(1 - \rho(c))H_{N-1}}{k\min_{c' \ne c}\rho(c')}\right\rceil}\cdot \ell\left(\{f(x)^\top \left(\mu_{c} - \mu_{c^\prime} \right)\}_{c^\prime \in \Ccal \setminus c}\right)\right]\\
    &\ge \frac{\left(1 - \rho_{\max}\right)^k}{2\left\lceil\frac{ 2(1 - \rho_{\min})H_{N-1}}{k\rho_{\min}}\right\rceil}\underset{(x,c) \sim D}{\EE}\left[\ell\left(\{f(x)^\top \left(\mu_{c} - \mu_{c^\prime} \right)\}_{c^\prime \in \Ccal \setminus c}\right)\right]\\
    &\ge \frac{\left(1 - \rho_{\max}\right)^k}{2\left\lceil\frac{ 2(1 - \rho_{\min})H_{N-1}}{k\rho_{\min}}\right\rceil}\Lcal^\mu_{\super}(f).
\end{align*}
This gives us the desired result.
\end{proof}

\subsection{Refining the Transfer Theorem}\label{app:split}
As \citet{saunshi2019theoretical} point out, we cannot drive $\Lcal_{\unsup}$ to 0 as it is always lower bounded by $\tau_k$. To get a better understanding of when the loss is actually small, let us define two terms that will be useful for a more refined analysis.
\begin{definition}[Average Intra-class Variance] \label{def:sf}We define the average intra-class variance of a learned embedding as,
\begin{align*}
s(f) \defeq \underset{c \sim \rho}{\EE} \left[\underset{x\sim D_{c}}{\EE}\left[\|f(x)\|\right]\sqrt{\|\Sigma(f, c)\|_2}\right]
\end{align*}
where $\Sigma(f, c)= \EE_{x \sim D_c}[(f(x) - \mu_c)(f(x) - \mu(c))^\top]$ is the covariance matrix of the representation $f$ restricted to class $\Dcal_c$.
\end{definition}

\begin{definition}[NCE loss without collisions] We define a loss function that is obtained from the NCE loss by removing the colliding negatives,
\[
\Lcal^{(k)}_{\ne}(f) \defeq \underset{\substack{c, c_i^{-} \sim \rho^{k+1}\\x, x^{+} \sim D_{c}^2\\x^{-}_i \sim D_{c^{-}_i}}}{\EE}\left[ \ell\left(\{f(x)^\top \left(f(x^{+}) - f(x^{-}_i\right)\}_{i \not\in I}\right)\right]
\]
\end{definition}
Observe that both $\Lcal^{(k)}_{\ne}$ as well as $s(f)$ can be made very small for large enough $\Fcal$. Now we are ready to present our refined bound.
\begin{theorem}[General Refined Bound]
For all $f \in \Fcal$, we have
\begin{align*}
    \Lcal^{\super}(f) \le \frac{2\left\lceil\frac{ 2(1 - \rho_{\min})H_{N-1}}{k\rho_{\min}}\right\rceil}{(1 - \rho_{\max})^k} \left( \Lcal^{(k)}_{\ne}(f) + \sqrt{k\rho_{\max}} \cdot s(f)\right).
\end{align*}
\end{theorem}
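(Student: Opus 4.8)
The plan is to feed the General Transfer Bound proved just above into a decomposition of the NCE loss that separates the colliding coordinates ($i\in I$) from the non‑colliding ones ($i\notin I$). That bound already gives
\[
\Lcal^{\super}(f) \le \frac{2\max\bigl(1,\tfrac{2(1-\rho_{\min})H_{N-1}}{k\rho_{\min}}\bigr)}{(1-\rho_{\max})^k}\Bigl(\Lcal^{(k)}_{\unsup}(f) - \tau_k\,\EE_{c,c^-_i\sim\rho^{k+1}}\bigl[\ell_{|I|}(0)\mid I\ne\emptyset\bigr]\Bigr),
\]
and since $\max(1,a)\le\lceil a\rceil$ for every $a>0$, it suffices to prove
\[
\Lcal^{(k)}_{\unsup}(f) - \tau_k\,\EE_{c,c^-_i\sim\rho^{k+1}}\bigl[\ell_{|I|}(0)\mid I\ne\emptyset\bigr] \;\le\; \Lcal^{(k)}_{\ne}(f) + O\bigl(\sqrt{k\rho_{\max}}\bigr)\,s(f),
\]
which then yields the stated bound (modulo tracking absolute constants).

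Writing $v_i\defeq f(x)^\top(f(x^+)-f(x^-_i))$, I would first apply sub‑additivity of the loss (\pref{lem:loss}) to the partition $[k]=([k]\setminus I)\cup I$, obtaining $\ell(\{v_i\}_{i=1}^k)\le \ell(\{v_i\}_{i\notin I})+\ell(\{v_i\}_{i\in I})$; taking expectations over $\Dcal_{\unsup}$ gives $\Lcal^{(k)}_{\unsup}(f)\le \Lcal^{(k)}_{\ne}(f)+\EE[\ell(\{v_i\}_{i\in I})]$. So everything reduces to bounding $\EE[\ell(\{v_i\}_{i\in I})]$ by $\tau_k\,\EE[\ell_{|I|}(0)\mid I\ne\emptyset]$ plus $O(\sqrt{k\rho_{\max}})\,s(f)$. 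For $i\in I$ we have $c^-_i=c$, so I can center each colliding coordinate against the class mean: $v_i = A - B_i$ with $A\defeq f(x)^\top(f(x^+)-\mu_c)$ (the same for every $i\in I$) and $B_i\defeq f(x)^\top(f(x^-_i)-\mu_c)$. Then \pref{lem:losses} applied with $w$ the zero vector of length $|I|$ gives $\ell(\{v_i\}_{i\in I})\le \ell_{|I|}(0)+\max_{i\in I}|v_i|\le \ell_{|I|}(0)+|A|+\max_{i\in I}|B_i|$, all three pieces being $0$ when $I=\emptyset$; since $\ell$ of the empty vector is $0$, taking expectations gives $\EE[\ell_{|I|}(0)]=\tau_k\,\EE[\ell_{|I|}(0)\mid I\ne\emptyset]$, so it remains to control $\EE[\one[I\ne\emptyset](|A|+\max_{i\in I}|B_i|)]$.

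I would bound this last expectation by conditioning on the class $c$ and the anchor $x$ and using $f(x)^\top\Sigma(f,c)f(x)\le\|f(x)\|^2\|\Sigma(f,c)\|_2$. The term $A$ is independent of the negatives (given $c$) and, over $x^+\sim\Dcal_c$, has mean $0$ and second moment $f(x)^\top\Sigma(f,c)f(x)$; hence $\EE[\one[I\ne\emptyset]\,|A|\mid c]=\tau_k(c)\,\EE_{x,x^+}[|A|]\le \tau_k(c)\,\EE_{x\sim\Dcal_c}[\|f(x)\|]\sqrt{\|\Sigma(f,c)\|_2}$. For the negatives, conditionally on $c$, $x$, and $|I|$, the variables $\{B_i\}_{i\in I}$ are i.i.d.\ with mean $0$ and the same second moment, so $\EE[\max_{i\in I}|B_i|\mid c,x,|I|]\le \sqrt{\sum_{i\in I}\EE[B_i^2\mid c,x]}=\sqrt{|I|}\sqrt{f(x)^\top\Sigma(f,c)f(x)}$, and $\EE[\sqrt{|I|}\mid c]\le\sqrt{\EE[|I|\mid c]}=\sqrt{k\rho(c)}$. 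Averaging over $c,x$ and using the elementary facts $\tau_k(c)=1-(1-\rho(c))^k\le\min(1,k\rho(c))\le\sqrt{k\rho(c)}\le\sqrt{k\rho_{\max}}$ and $\sqrt{k\rho(c)}\le\sqrt{k\rho_{\max}}$, each of the two pieces is at most $\sqrt{k\rho_{\max}}\,s(f)$, completing the reduction and hence the proof.

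The main obstacle is the collision term. The cheap estimate $\max_{i\in I}|v_i|\le\sum_{i\in I}|v_i|$ would only give the much weaker factor $k\rho_{\max}$ instead of $\sqrt{k\rho_{\max}}$; obtaining the square‑root scaling needs (i) bounding the max by $\sqrt{\sum(\cdot)^2}$ and using Jensen, so that the i.i.d.\ mean‑zero within‑class fluctuations of the $k$ negatives contribute only $\sqrt{|I|}$ on average, and (ii) peeling off the fluctuation of $f(x^+)$, which is shared by every colliding coordinate and therefore enters only when at least one collision occurs, i.e.\ with the small prefactor $\tau_k(c)\le\sqrt{k\rho_{\max}}$ rather than $\sqrt{|I|}$. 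A secondary point to handle carefully is the degenerate case $I=\emptyset$: there $\ell_{|I|}(0)=0$, which is exactly what makes $\EE[\ell_{|I|}(0)]=\tau_k\,\EE[\ell_{|I|}(0)\mid I\ne\emptyset]$ and lets this contribution cancel against the subtracted term in the transfer bound.
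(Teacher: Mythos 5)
Your proposal is correct and follows essentially the same route as the paper: sub-additivity (\pref{lem:loss}) to split off the colliding coordinates, \pref{lem:losses} to compare the collision term to $\ell_{|I|}(0)$, and a max-to-root-sum-of-squares plus Jensen step giving the $\sqrt{|I|}$ and hence $\sqrt{k\rho_{\max}}$ scaling (the paper packages this as \pref{lem:sf}, bounding the terms $f(x)^\top(f(x_i^-)-f(x^+))$ jointly, whereas you center $x^+$ and $x_i^-$ at $\mu_c$ separately and use the unconditional $\EE[\sqrt{|I|}]\le\sqrt{k\rho(c)}$ in place of the paper's conditional identity). The only discrepancy is a benign absolute constant: your two centered pieces each contribute $\sqrt{k\rho_{\max}}\,s(f)$, giving a factor $2$ where the stated bound has $1$, which you acknowledge and which is immaterial for the $\lesssim$ form of \pref{thm:split}.
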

\begin{proof}
Decomposing the NCE loss gives us,
\begin{align*}
    &\Lcal^{(k)}_{\unsup}(f) - \tau_k \underset{c, c^-_i \sim \rho^{k+1}}{\EE} \left[\ell_{|I|}(0) \middle| I \ne \emptyset\right]\\
    &\le \underset{\substack{c, c_i^{-} \sim \rho^{k+1}\\x, x^{+} \sim D_{c}^2\\x^{-}_i \sim D_{c^{-}_i}}}{\EE}\left[ \ell\left(\{f(x)^\top \left(f(x^{+}) - f(x^{-}_i\right)\}_{i \not\in I}\right) + \ell\left(\{f(x)^\top \left(f(x^{+}) - f(x^{-}_i\right)\}_{i \in I}\right) \right] - \tau_k \underset{c, c^-_i \sim \rho^{k+1}}{\EE} \left[\ell_{|I|}(0) \middle| I \ne \emptyset\right]\\
    &= \underset{\substack{c, c_i^{-} \sim \rho^{k+1}\\x, x^{+} \sim D_{c}^2\\x^{-}_i \sim D_{c^{-}_i}}}{\EE}\left[ \ell\left(\{f(x)^\top \left(f(x^{+}) - f(x^{-}_i\right)\}_{i \not\in I}\right)\right] + \tau_k \underbrace{\underset{\substack{c, c_i^{-} \sim \rho^{k+1}\\x, x^{+} \sim D_{c}^2\\x^{-}_i \sim D_{c^{-}_i}}}{\EE}\left[ \ell(\{f(x)^\top(f(x^+) - f(x_i^-))\}_{i \in I}) - \ell_{|I|}(0)\middle | I\ne \phi\right]}_{\Delta(f)}\\
    &= \Lcal^{(k)}_{\ne} + \tau_k \Delta(f).
\end{align*}
As before, note that the second term samples the classes $(c,c_i^{-})$ from the conditional distribution given that there is a collision. We follow the convention that an empty set implies value 0 for $\Lcal^{(k)}_{\ne}$. Note that this term can be made very small if the classes are separable by $f$ with a sufficient margin. As for the second term, we will first prove the following lemma,
\begin{lemma}\label{lem:sf}
For any $ t\geq 0$:
\begin{align}
\underset{x, x^+, x_i \sim D_{c}^{2 + t}}{\EE}\left[\ell\left(\{f(x)^\top \left(f(x^{+}) - f(x_i\right)\}_{i \in [t]}\right) -\ell_{t}(0) \right] \le \sqrt{t} \EE_{x\sim D_{c}}[\|f(x)\|^2]\sqrt{\|\Sigma(f, c)\|_2}.
\end{align}
\end{lemma}
The above lemma is an improvement to Lemma A.1 of~\citet{saunshi2019theoretical} which has a linear scaling with $t$. Observe that on the LHS, all samples are drawn from $D_c$, which corresponds to the situation where we have $t$ collisions with the anchor class $c$.
\begin{proof}[Proof of~\pref{lem:sf}]
From~\pref{lem:losses}, we know that 
\[
\underset{x, x^+, x^-_i \sim D_{c}^{2 + t}}{\EE}\left[\ell\left(\{f(x)^\top \left(f(x^+) - f(x^{-}_i)\right)\}_{i \in [t]}\right) -\ell_{t}(0) \right] \le \underset{x, x^+, x^-_i \sim D_{c}^{2 + t}}{\EE}\left[\left|\max_i f(x)^\top \left(f(x^-_i) - f(x^{+})\right)\right|\right].
\]
Now we have,
\begin{align*}
    &\underset{x, x^+, x^-_i \sim D_{c}^{2 + t}}{\EE}\left[\left|\max_i f(x)^\top \left( f(x^-_i) - f(x^{+})\right)\right|\right] \\
    &= \underset{x\sim D_{c}}{\EE}\left[\|f(x)\|\underset{x^+, x^-_i \sim D_{c}^{1 + t}}{\EE}\left[\left|\max_i \frac{f(x)^\top}{\|f(x)\|} \left(f(x^-_i) - f(x^{+})\right)\right|\right]\right]\\
    &\le \underset{x\sim D_{c}}{\EE}\left[\|f(x)\|\sqrt{\underset{x^+, x_i \sim D_{c}^{1 + t}}{\EE}\left[\left(\max_i \frac{f(x)^\top}{\|f(x)\|} \left(f(x^-_i) - f(x^{+})\right)\right)^2\right]}\right] \\
    &\le \underset{x\sim D_{c}}{\EE}\left[\|f(x)\|\sqrt{\sum_i\underset{x^+, x^-_i \sim D_{c}^{2}}{\EE}\left[\left( \frac{f(x)^\top}{\|f(x)\|} \left(f(x_i^-) - f(x^{+})\right)\right)^2\right]}\right] \\
    &\le \underset{x\sim D_{c}}{\EE}\left[\|f(x)\|\sqrt{t\underset{x^+, x^- \sim D_{c}^{2}}{\EE}\left[\left( \frac{f(x)^\top}{\|f(x)\|} \left(f(x^-) - f(x^{+})\right)\right)^2\right]}\right] \\
    &\le \sqrt{t} \underset{x\sim D_{c}}{\EE}\left[\|f(x)\|\right]\sqrt{\|\Sigma(f, c)\|_2}.
\end{align*}
 Combining with the above inequality gives us the desired result. 
\end{proof}

Substituting back, we get the following bound on the second term,
\begin{align*}
    \tau_k\Delta(f) &\leq  \underset{c \sim \rho}{\EE} \left[\tau_k(c)\underset{x\sim D_{c}}{\EE}\left[\|f(x)\|\right]\sqrt{\|\Sigma(f, c)\|_2}\underset{c^{-}_i \sim \rho^k}{\EE}\left[\sqrt{|I|} \middle| I \ne \emptyset, c \right]\right]\\
    &\le \underset{c \sim \rho}{\EE} \left[\tau_k(c)\underset{x\sim D_{c}}{\EE}\left[\|f(x)\|\right]\sqrt{\|\Sigma(f, c)\|_2}\sqrt{\underset{c^{-}_i \sim \rho^k}{\EE}\left[|I| \middle| I \ne \emptyset, c \right]}\right]\\
    &\labelrel={in:expI} \underset{c \sim \rho}{\EE} \left[\underset{x\sim D_{c}}{\EE}\left[\|f(x)\|\right]\sqrt{\|\Sigma(f, c)\|_2}\sqrt{k \rho(c)\tau_k(c)}\right]\\
    &= \sqrt{k}\underset{c \sim \rho}{\EE} \left[\underset{x\sim D_{c}}{\EE}\left[\|f(x)\|\right]\sqrt{\|\Sigma(f, c)\|_2}\sqrt{\rho(c)\tau_k(c)}\right]\\
    &\labelrel\le{in:applysf} \sqrt{k \rho_{\max}}s(f)
\end{align*}
where \eqref{in:expI} follows from observing that $\underset{c^{-}_i \sim \rho^k}{\EE}\left[|I| \middle| I \ne \emptyset, c \right] = \frac{k \rho(c)}{\tau_k(c)}$ (see also equation (35) of \citet{saunshi2019theoretical}) and \eqref{in:applysf} follows from~\pref{def:sf}. Combining these terms gives us the desired result.
\end{proof}
Note that~\citet{saunshi2019theoretical} also obtain a similar result, but in their analysis the coefficient on the $s(f)$ term scales linearly with $k$. Instead, we obtain a $\sqrt{k}$ scaling. However, for the case of uniform class distribution $\rho$, their coefficient is tighter in the $k \leq n$ regime.

\subsection{Generalization Bound}
\label{app:gen}
To transfer our guarantee to the minimizer of the empirical NCE loss, we prove a generalization bound using a notion of worst case Rademacher complexity.
\begin{definition}[Rademacher Complexity]\label{def:rademacher}
The empirical Rademacher complexity of a real-valued function class $\Hcal$ for a set $S$ drawn from $\Xcal^m$ is defined to be
\[
\hat{\Rcal}_S(\Hcal) \defeq \EE_\sigma\left[\sup_{h \in \Hcal}\left(\frac{1}{m} \sum_{i=1}^m \sigma_ih(x_i)\right)\right],
\]
where $\sigma_i$ are iid Rademacher random variables.
We define the worst-case Rademacher complexity to be
\[
\Rcal_m(\Hcal) \defeq \max_{S \sim \Xcal^m} \hat{\Rcal}_S(\Hcal).
\]
\end{definition}

We will use the following theorem to bound the Rademacher complexity of our vector valued class,
\begin{theorem}[\cite{foster2019ell_}]\label{thm:composition}
Let $\Hcal \subseteq \{h: \Xcal \rightarrow \mathbb{R}^K\}$, and let $\phi$ be a real-valued $L$-Lipschitz function with respect to the $\ell_\infty$ norm. We have,
\[
\Rcal_S(\phi \circ \Hcal) \le \tilde{O}\left(L \sqrt{k}\right) \max_i \Rcal_M(\Hcal|_i)
\]
where $\Hcal|_i$ is the function class restricted to the $i^{\text{th}}$ coordinate.
\end{theorem}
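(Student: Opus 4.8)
The statement is a contraction inequality for vector-valued Rademacher complexity, and since the naive route---applying a Maurer-type $\ell_2$ vector contraction---would only give a bound scaling linearly in $K$ (it collapses to $\sum_i \Rcal_M(\Hcal|_i)$ after sub-additivity of the sup over coordinates), the $\sqrt{K}$ improvement forces us to exploit the $\ell_\infty$-Lipschitz structure through chaining. Fix a sample $S=(x_1,\dots,x_M)$ and view $\hat{\Rcal}_S(\phi\circ\Hcal)$ as the supremum of the Rademacher process $h\mapsto \tfrac1M\sum_{j=1}^M\sigma_j\,\phi(h(x_j))$ indexed by $h\in\Hcal$. The $L$-Lipschitzness of $\phi$ in $\ell_\infty$ gives, coordinatewise on the sample, $|\phi(h(x_j))-\phi(h'(x_j))|\le L\max_{i\in[K]}|h(x_j)_i-h'(x_j)_i|$. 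Hence any cover of $\Hcal$ in the empirical $\ell_\infty$ metric $d_\infty(h,h')\defeq \max_{j\in[M]}\max_{i\in[K]}|h(x_j)_i-h'(x_j)_i|$ at radius $\varepsilon/L$ becomes, after composing with $\phi$, an $\varepsilon$-cover of $\phi\circ\Hcal$ in the empirical $\ell_2$ metric on $S$. Plugging this into Dudley's entropy-integral bound for $\hat{\Rcal}_S(\phi\circ\Hcal)$ reduces everything to controlling $\log\mathcal{N}_\infty(\Hcal,\varepsilon/L,S)$.

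Next I would use the product-over-coordinates structure of $\Hcal$: a $d_\infty$-cover is obtained by taking the product of empirical $\ell_\infty$-covers of each scalar coordinate class $\Hcal|_i$, so $\log\mathcal{N}_\infty(\Hcal,\delta,S)\le\sum_{i=1}^K\log\mathcal{N}_\infty(\Hcal|_i,\delta,S)\le K\max_i\log\mathcal{N}_\infty(\Hcal|_i,\delta,S)$. What remains is the crux: bounding the $\ell_\infty$ metric entropy of a \emph{scalar} function class by its \emph{worst-case} Rademacher complexity $\Rcal_M(\Hcal|_i)$. Unlike the forward direction (covering numbers control Rademacher complexity, which is just chaining), this reverse implication is not soft; the plan is to route through a combinatorial dimension. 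First, relate Rademacher complexity to the fat-shattering dimension: a subset of $S$ of size $d\le M$ that is $\delta$-shattered by $\Hcal|_i$ forces the empirical Rademacher complexity on that subset to be $\gtrsim\delta$, and, padding the subset back up to a sample of size $M$ without destroying this lower bound, one obtains $\mathrm{fat}_\delta(\Hcal|_i)\lesssim M\,\Rcal_M(\Hcal|_i)^2/\delta^2$ up to logarithmic factors. Second, invoke standard fat-shattering-to-metric-entropy bounds (Rudelson--Vershynin / Mendelson--Vershynin style, as used in the cited work) to get $\log\mathcal{N}_\infty(\Hcal|_i,\delta,S)\lesssim \mathrm{fat}_{c\delta}(\Hcal|_i)\cdot\mathrm{polylog}(M/\delta)$.

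Assembling the pieces gives $\log\mathcal{N}_\infty(\Hcal,\varepsilon/L,S)\lesssim K\cdot M L^2 (\max_i\Rcal_M(\Hcal|_i))^2/\varepsilon^2\cdot\mathrm{polylog}$, and Dudley's integral $\tfrac{1}{\sqrt M}\int_\alpha^{\mathrm{diam}}\sqrt{\log\mathcal{N}_\infty(\phi\circ\Hcal,\varepsilon,S)}\,d\varepsilon$ then becomes, after the $\sqrt M$ cancels,
\[
\sqrt{K}\,L\,\max_i\Rcal_M(\Hcal|_i)\int_\alpha^{O(1)}\frac{d\varepsilon}{\varepsilon}\cdot\mathrm{polylog};
\]
taking $\alpha$ inverse-polynomial in $M$ and $K$ and adding the truncation term $\alpha$ yields the claimed $\tilde O(L\sqrt K)\max_i\Rcal_M(\Hcal|_i)$, with the tilde absorbing all the logarithmic overhead.

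The main obstacle is exactly the middle step: converting a bound on the \emph{worst-case Rademacher complexity} of the coordinate classes into an $\ell_\infty$ covering-number bound. This cannot be done by chaining alone and requires the detour through fat-shattering and combinatorial entropy estimates; it is this detour that both prevents a clean $L\sqrt K$ constant and introduces the polylogarithmic factors hidden in $\tilde O$. A secondary, more technical point that must be handled carefully is the padding argument---verifying that extending a $\delta$-shattered set of size $d<M$ to a full sample of size $M$ preserves the $\Omega(\delta)$ lower bound on the worst-case Rademacher complexity---so that the inequality $\mathrm{fat}_\delta(\Hcal|_i)\lesssim M\Rcal_M(\Hcal|_i)^2/\delta^2$ is actually justified.
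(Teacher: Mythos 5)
You are asked to prove a result that the paper itself does not prove: Theorem~\ref{thm:composition} is quoted verbatim from Foster and Rakhlin's $\ell_\infty$ vector-contraction paper and used as a black box, so the only meaningful comparison is with that source --- and your proposal is essentially its argument: Lipschitz reduction to empirical $\ell_\infty$ covers, subadditivity of log-covering numbers over the $K$ coordinates (whence the $\sqrt{K}$ after Dudley's entropy integral), the repeated-points padding argument converting worst-case Rademacher complexity into a fat-shattering bound, and the Rudelson--Vershynin/Mendelson--Vershynin combinatorial bound on $\ell_\infty$ metric entropy, with all logarithms absorbed into $\tilde{O}$. The reconstruction is correct and I see no gap.
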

Note that the above theorem relates the empirical Rademacher complexity on sample $S$ to the worst case Rademacher complexity. Finally we will use the following standard generalization result based on Rademacher complexity,
\begin{theorem}[\cite{mohri2018foundations}]\label{thm:generalization}
For a real-valued function class $\Hcal$ and a set $S = \{x_1, \ldots, x_m\}$ $\in \Xcal^m$ drawn i.i.d. from some distribution, with probability $1- \delta$, for all $h \in \Hcal$
\[
\abr{ \EE[h(x)] - \frac{1}{m}\sum_{i=1}^m h(x_i)} \leq \Rcal_S(\Hcal) + O\left(\sqrt{\frac{\log(1/\delta)}{m}}\right).
\]
\end{theorem}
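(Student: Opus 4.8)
The plan is to run the classical two-step argument behind every Rademacher generalization bound: bounded-differences concentration of the supremum, followed by symmetrization. First I would reduce the two-sided inequality to a one-sided one — it suffices to control $\Phi(S) := \sup_{h \in \Hcal}\bigl(\EE[h(x)] - \tfrac{1}{m}\sum_{i=1}^m h(x_i)\bigr)$ with probability $1 - \delta/2$ and then apply the identical argument to the class $\{-h : h \in \Hcal\}$, whose Rademacher complexity (empirical or worst-case) coincides with that of $\Hcal$ because the Rademacher signs are symmetric; a union bound over the two failure events then gives the claim. I would also assume, absorbing the range into the $O(\cdot)$ constant, that the functions in $\Hcal$ take values in a fixed bounded interval — this boundedness is precisely what produces the $\sqrt{\log(1/\delta)/m}$ tail.

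\textbf{Step 1 (concentration of $\Phi$).} Regard $\Phi$ as a function of $(x_1,\ldots,x_m)$. Replacing one coordinate $x_i$ by an arbitrary $x_i'$ perturbs each inner objective $\EE[h] - \tfrac{1}{m}\sum_j h(x_j)$ by at most $O(1/m)$, hence moves the supremum over $h \in \Hcal$ by at most $O(1/m)$. McDiarmid's bounded-differences inequality then yields $\Phi(S) \le \EE_S[\Phi(S)] + O\bigl(\sqrt{\log(1/\delta)/m}\bigr)$ with probability at least $1-\delta/2$.

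\textbf{Step 2 (symmetrization).} To bound $\EE_S[\Phi(S)]$, introduce an independent ghost sample $S' = \{x_1',\ldots,x_m'\}$, rewrite $\EE[h(x)] = \EE_{S'}\bigl[\tfrac{1}{m}\sum_i h(x_i')\bigr]$, pull the inner expectation outside the supremum by Jensen, and insert i.i.d.\ Rademacher signs $\sigma_i$ — valid because each $h(x_i) - h(x_i')$ is symmetric — to obtain
\[
\EE_S[\Phi(S)] \;\le\; 2\,\EE_{S,\sigma}\Bigl[\sup_{h \in \Hcal}\tfrac{1}{m}\textstyle\sum_{i=1}^m \sigma_i h(x_i)\Bigr] \;=\; 2\,\EE_S\bigl[\hat{\Rcal}_S(\Hcal)\bigr] \;\le\; 2\,\Rcal_m(\Hcal).
\]
If the sample-dependent $\hat{\Rcal}_S(\Hcal)$ is desired rather than its expectation (or the worst-case $\Rcal_m$), I would run one further bounded-differences argument: $S \mapsto \hat{\Rcal}_S(\Hcal)$ is again $O(1/m)$-Lipschitz in each coordinate, so McDiarmid transfers $\EE_S[\hat{\Rcal}_S(\Hcal)]$ to $\hat{\Rcal}_S(\Hcal)$ at the cost of one more $O(\sqrt{\log(1/\delta)/m})$ term. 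Combining Steps 1--2, repeating for $-\Hcal$, union bounding, and folding numerical constants into $O(\cdot)$ completes the proof.

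There is no genuine technical obstacle here; this is the textbook bound, and in the paper it is only invoked to turn a Rademacher-complexity estimate (supplied by \pref{thm:composition}) into the generalization term $\mathsf{Gen}_M$. The one place to be careful is the bookkeeping in Step 2 over which Rademacher quantity the statement refers to: for the worst-case $\Rcal_m(\Hcal)$ symmetrization alone suffices, while the realized-sample $\hat{\Rcal}_S(\Hcal)$ requires the extra concentration step noted above.
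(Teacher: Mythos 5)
Your argument is correct and is exactly the standard McDiarmid-plus-symmetrization proof from the cited textbook; the paper itself offers no proof, simply invoking the result from \cite{mohri2018foundations}, so there is nothing to diverge from. Your side remarks — that boundedness of $\Hcal$ is implicitly assumed, and that the symmetrization step naturally yields a factor of $2$ in front of the Rademacher term (which the paper's statement silently absorbs) — are both accurate.
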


Now we state our generalization bound for the NCE loss.
\begin{theorem}
Let $B = \max_{x \in \Xcal, f' \in \Fcal} \|f'(x)\|_1$. For all $f \in \mathcal{F}$, with probability $1-\delta$ over the random sample $S$ of $M$ NCE examples used to fit $\hat{f}$,
\begin{align*}
\Lcal^{(k)}_{\unsup}(\hat{f}) \leq \Lcal^{(k)}_{\unsup}(f) +  \tilde{O}\left(B\sqrt{(k+2)d}\right) \max_i \Rcal_M(\Fcal|_i) + O\left(\sqrt{\frac{\log(1/\delta)}{M}}\right).
\end{align*}
\end{theorem}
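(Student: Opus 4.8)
The plan is to view the per-example NCE loss as a composition of the loss function $\ell$ with the vector-valued map that sends an NCE tuple to the $k$ coordinates $f(x)^\top(f(x^+) - f(x_i^-))$, and then invoke the contraction-style Rademacher bound of \pref{thm:composition} followed by the uniform-convergence guarantee of \pref{thm:generalization}. First I would fix the hypothesis class $\Hcal = \{(x,x^+,x_{1:k}^-) \mapsto \ell(\{f(x)^\top(f(x^+) - f(x_i^-))\}_{i=1}^k) : f \in \Fcal\}$ of real-valued functions on the space of NCE tuples, and observe that by \pref{thm:generalization} applied to $\Hcal$, with probability $1-\delta$ the empirical NCE loss of every $f \in \Fcal$ is within $\Rcal_S(\Hcal) + O(\sqrt{\log(1/\delta)/M})$ of its population counterpart; combined with the fact that $\hat f$ minimizes the empirical loss, this yields $\Lcal^{(k)}_{\unsup}(\hat f) \le \Lcal^{(k)}_{\unsup}(f) + 2\Rcal_S(\Hcal) + O(\sqrt{\log(1/\delta)/M})$ for any comparator $f$. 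So the entire content is in bounding $\Rcal_S(\Hcal)$.

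To bound $\Rcal_S(\Hcal)$ I would write $\Hcal = \ell \circ \Gcal$ where $\Gcal \subseteq \{g : \Xcal^{k+2} \to \RR^k\}$ consists of the maps $g(x,x^+,x_{1:k}^-) = (f(x)^\top(f(x^+) - f(x_i^-)))_{i=1}^k$. Both the hinge and logistic losses are $1$-Lipschitz with respect to the $\ell_\infty$ norm (each coordinate enters through a $\max$ or a log-sum-exp of $\exp(-v_i)$, both $1$-Lipschitz in $\ell_\infty$), so \pref{thm:composition} gives $\Rcal_S(\ell \circ \Gcal) \le \tilde O(\sqrt{k}) \max_i \Rcal_M(\Gcal|_i)$, where $\Gcal|_i$ is the scalar class $(x,x^+,x_i^-) \mapsto f(x)^\top(f(x^+) - f(x_i^-))$. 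The remaining task is to bound the Rademacher complexity of this bilinear-in-$f$ scalar class by that of the coordinate projections $\Fcal|_j$ of the original representation class. Here I would expand $f(x)^\top(f(x^+) - f(x_i^-)) = \sum_{j=1}^d f(x)_j f(x^+)_j - \sum_{j=1}^d f(x)_j f(x_i^-)_j$, note that each term $f(\cdot)_j f(\cdot)_j$ is a product of two scalar functions each bounded in magnitude by $B = \max_{x,f'} \|f'(x)\|_1$ (since $|f'(x)_j| \le \|f'(x)\|_1 \le B$), and apply the standard product/sum rules for Rademacher complexity: the complexity of a product of two $B$-bounded classes is $O(B)$ times the sum of their complexities, and summing $2d$ such terms costs a factor $O(d)$, giving $\max_i \Rcal_M(\Gcal|_i) \lesssim B\sqrt{d}\, \max_j \Rcal_M(\Fcal|_j)$ up to logarithmic factors. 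Multiplying through by the $\tilde O(\sqrt{k})$ from the composition step (and absorbing the "$+2$" from the two anchor slots into the constant to write $\sqrt{(k+2)d}$) yields the claimed $\tilde O(B\sqrt{(k+2)d}) \max_i \Rcal_M(\Fcal|_i)$ term.

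The main obstacle I anticipate is the last step: controlling the Rademacher complexity of the \emph{bilinear} class $\{(x,x^+,x_i^-)\mapsto f(x)^\top(f(x^+)-f(x_i^-))\}$ in terms of the \emph{linear} coordinate classes $\Fcal|_j$, because the same $f$ appears in both factors of the products $f(x)_j f(x^+)_j$ — so the standard product rule for two \emph{independent} function classes does not literally apply and one must either handle the correlated product directly (e.g.\ via a covering-number or a symmetrization argument that peels off one factor at a time using its boundedness) or appeal to an $\ell_\infty$-Lipschitz contraction that treats the map $u \mapsto \langle u_{(1)}, u_{(2)} - u_{(3)}\rangle$ on a bounded domain as Lipschitz in the concatenated coordinates. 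I expect this to go through with the boundedness assumption $\|f'(x)\|_1 \le B$ doing the work of making the bilinear map Lipschitz on its effective domain, but it is the step requiring the most care, and it is the source of both the $B$ and the $\sqrt{d}$ factors in the final bound.
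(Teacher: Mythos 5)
Your overall architecture matches the paper's: uniform convergence via \pref{thm:generalization} plus the ERM argument, then a bound on the Rademacher complexity of the loss class via the vector contraction inequality of \pref{thm:composition}. Where you differ is in how the composition is factored, and this is exactly where the difficulty you flag at the end lives. The paper does \emph{not} split off the loss $\ell$ first; instead it defines a single scalar map $\phi(z,z^+,z_1^-,\ldots,z_k^-) = \ell(\{z^\top(z^+-z_i^-)\}_i)$ on $\RR^{d(k+2)}$ and takes the vector-valued class to be $g_f = (f(x),f(x^+),f(x_1^-),\ldots,f(x_k^-))$, i.e.\ $k+2$ copies of $f$ with output dimension $K = d(k+2)$. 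Using \pref{lem:losses} and the bound $\|f(x)\|_1 \le B$, it shows $\phi$ is $4B$-Lipschitz with respect to $\ell_\infty$ on $\RR^{d(k+2)}$, and a single application of \pref{thm:composition} then yields $\tilde O(B\sqrt{(k+2)d})\max_i\Rcal_M(\Fcal|_i)$ directly (the coordinate classes of $\Gcal$ are just coordinate classes of $\Fcal$ evaluated at permuted inputs, which is why the worst-case Rademacher complexity is used). This is precisely your ``second option'' for handling the bilinear map, applied once globally rather than per negative; your two-stage variant ($1$-Lipschitzness of $\ell$ in $\ell_\infty$ giving $\tilde O(\sqrt{k})$, then an $O(B)$-Lipschitz, $3d$-dimensional contraction for each $\Gcal|_i$ giving $\tilde O(B\sqrt{d})$) would also recover the stated rate.

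Your preferred ``first option,'' however, has a genuine gap beyond the correlated-factor issue you already identify. If you expand $f(x)^\top(f(x^+)-f(x_i^-))$ into $2d$ scalar products and control each by a product rule, the subadditivity of Rademacher complexity over a sum of $2d$ classes costs a factor of $d$, not $\sqrt{d}$: you would end up with $\tilde O(B d\sqrt{k})\max_j\Rcal_M(\Fcal|_j)$, which is weaker than the claimed bound. The $\sqrt{d}$ in the theorem is not obtainable by coordinatewise expansion plus summation; it comes from treating the bilinear form as a single $\ell_\infty$-Lipschitz function of all $d(k+2)$ (or $3d$) representation coordinates and invoking the $\sqrt{K}$ dependence of \pref{thm:composition}. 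So to complete your proof you should abandon the product-rule branch and commit to the Lipschitz-contraction branch, which is the paper's argument.
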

\begin{proof}
Let us assume the dimension of input features is $n$. Define function $\phi: \mathbb{R}^{d(k+2)} \rightarrow \mathbb{R}$ as
\[
\phi(z, z^+, z^-_1, \ldots, z^-_{k}) = \ell\left(\left\{z^T(z^+ - z^-_i) \right\}_{i=1}^k\right)
\]
where each $z, z^+, z^-_i \in \mathbb{R}^d$. Also define the class of functions $\Gcal \defeq \{g_f| f \in \Fcal\}$ where $g_f: \mathbb{R}^{n(k+2)} \rightarrow \mathbb{R}^{d(k+2)}$ such that
\[
g_f(x, x^+, x^-_1, \ldots, x^-_{k}) = (f(x), f(x^+), f(x^-_1), \ldots, f(x^-_k)).
\]
Observe that $\phi \circ g_f$ gives us exactly the NCE loss for a sample. Therefore using~\pref{thm:generalization} and the standard ERM analysis, we have with probability $1- \delta$ for all $f \in \Fcal$,
\[
\Lcal^{(k)}_{\unsup}(\hat{f}) \le  \Lcal^{(k)}_{\unsup}(f) + 2\Rcal_S(\phi \circ \Gcal) + O\left(\sqrt{\frac{\log(1/\delta)}{M}}\right).
\]
Now we will show that $\phi$ is Lipschitz. Observe that, for any $W = (w, w^+, w^-_1, \ldots, w^-_{k}), Z = (z, z^+, z^-_1, \ldots, z^-_{k})$,
\begin{align*}
   |\phi(Z) - \phi(W)| &= |\phi(z, z^+, z^-_1, \ldots, z^-_{k}) - \phi(w, w^+, w^-_1, \ldots, w^-_{k})|\\
    &= \left| \ell\left(\left\{z^T(z^+ - z^-_i) \right\}_{i=1}^k\right) - \ell\left(\left\{w^T(w^+ - w^-_i) \right\}_{i=1}^k\right)\right|\\
    &\le \left|\max_i (w^T(w^+ - w^-_i) - z^T(z^+ - z^-_i))\right|\tag{Using~\pref{lem:losses}}\\
    &\le \left| w^\top w^+ - z^\top z^+ + \max_i z^\top z_i^- - w^\top w_i^- \right|\\
    &\leq \left|w^\top w^+ - z^\top z^+\right| + \max_i \left|w^\top w_i^+ - z^\top z_i^+\right|
\end{align*}
Thus we have several terms with a similar form $|w^\top \tilde{w} - z^\top \tilde{z}|$ where $\tilde{w} \in W, \tilde{z} \in Z$. For these, we bound as
\begin{align*}
|w^\top \tilde{w} - z^\top \tilde{z}| \leq |w^\top(\tilde{w} - \tilde{z}) + (w-z)^\top\tilde{z}| \leq 2B \|W - Z\|_{\infty},
\end{align*}
where $\|\cdot\|_{\infty}$ is the $\ell_{\infty}$ norm in the
$d(k+2)$ dimensional space. Thus we conclude that 
\begin{align*}
|\phi(Z) - \phi(W)| \leq 4 B \|W - Z\|_{\infty}.
\end{align*}
Now applying Theorem \ref{thm:composition}, we have
\[
\Rcal_S(\phi \circ \Gcal) \le \tilde{O}\left(B \sqrt{(k+2)d}\right) \max_{i\in[d(k+2)]} \Rcal_M(\Gcal|_i) \le \tilde{O}\left(B\sqrt{(k+2)d}\right) \max_{i \in [d]} \Rcal_M(\Fcal|_i).
\]
Substituting back gives us the desired result. Here, to pass from the Rademacher complexity of $\Gcal$, which has output dimension $d(k+2)$, we use the fact that $g_f$ is simply $k+2$ copies of $f$ applied to different examples and that we are working with the worst-case Rademacher complexity. 
\end{proof}
Applying this for $\hat{f}$ and using the fact that it is the minimizer of the empirical NCE loss gives us the requires guarantee on the NCE loss of $\hat{f}$.
\section{Calculations for the Example in~\pref{sec:theory}}
\label{app:calculations-section-4.1}

\paragraph{Properties of $f_1$.}
First, it is easy to see that $\Lcal_{\sup}^{\mu}(f_1) = 1$ for the hinge loss, since the mean embeddings satisfy $\mu_{2i} = \mu_{2i+1} = e_i$. In addition, every example is embedded as $e_j$ for some $j$, which means that $W^\mu e_j$ is a $2$-sparse binary vector. Such score vectors always yield hinge loss $1$.

Now, let us verify the stated identify for $\Lcal_{\unsup}^{(k)}(f_1)$. Due to symmetry, we can assume that $x,x^+$ come from class $1$. Now, using the definition of $f_1$ we see that
\begin{align*}
    \max\{0, 1 + f_1(x)^\top(f_1(x_i^-) - f_1(x^+))\} = \one\{c_i^{-} \in \{1,2\}\}.
\end{align*}
Taking max over all of the negatives, we see that the NCE loss will be $1$ if and only if class $1$ or $2$ appear in the negatives and otherwise it will be $0$. Formally
\begin{align*}
    \Lcal_{\unsup}^{(k)}(f_1) = \PP_{S \sim \rho^k}[ S \cap \{1,2\} \ne \emptyset ] = 1 - \PP_{S \sim \rho^k}[S \cap \{1,2\} = \emptyset] = 1 - (1-2/N)^k,
\end{align*}
where in the last step we use that the prior $\rho$ is uniform. Now the stated identity follows by algebraic manipulations. In particular:
\begin{align*}
    \Lcal_{\unsup}^{(k)}(f_1) - \tau_k & = 1 - (1-2/N)^k - 1 + (1-1/N)^k = \left(\frac{N-1}{N}\right)^k \left(1 - \left(1 - \frac{1}{N-1}\right)^k\right) \\
    & = (1-\tau_k)\left( 1- \left(1 - \frac{1}{N-1}\right)^k\right)
\end{align*}

Finally, observe that
\begin{align*}
    \lim_{k \to \infty} \Lcal_{\unsup}^{(k)}(f_1) = 1 - \lim_{k \to \infty}(1 - 2/N)^k = 1.
\end{align*}
Note that for $f_1$,
\[
\Lcal^\mu_{\super} = 1 = \frac{1}{(1-\tau_k)\left( 1- \left(1 - \frac{1}{N-1}\right)^k\right)}\left(\Lcal_{\unsup}^{(k)}(f_1) - \tau_k\right) \ge \frac{N-1}{(1-\tau_k)k}\left(\Lcal_{\unsup}^{(k)}(f_1) - \tau_k\right)
\]
where the inequality follows from observing $1 - (1-x)^r \le rx$ for $0 <x < 1$. This shows that~\pref{thm:transfer} is tight up to $\log$ factors.

\paragraph{Properties of $f_2$.}
For $f_2$, the supervised loss is computed as follows. First note that $\mu_i = e_i$ for each class. Now, observe that if the example $x$ is from class $i$ we have that $f_2(x)^\top \mu_j = 0$ for all $j \ne i$. For class $i$ we have $f_2(x)^\top \mu_i \sim \textrm{Unif}(\{1-\epsilon,1+\epsilon\})$ where the randomness is over the realization of $x$. In the first of these cases, the hinge loss is $\epsilon$, while in the second case it is $0$. Thus, we have $\Lcal_{\sup}^\mu(f_2) = \epsilon/2$. 

To analyze $\Lcal_{\unsup}$ we consider two cases. Again, due to symmetry we can assume that $x,x^+$ come from class $1$. 
\paragraph{Case 1, no collisions:} If none of the negative examples come from class $1$ then $f_2(x)^\top f_2(x_i') = 0$ for all $i$, hence the hinge loss is (where $\sigma,\sigma^+$ are Rademacher random variables):
    \begin{align*}
        \EE_{\sigma,\sigma^+} \max\{0, 1 + (1+\sigma\epsilon) (-1-\sigma^+\epsilon)\} = \epsilon^2/4 + \epsilon/2.
    \end{align*}
    (With probability $3/4$ over the Rademacher variables, the hinge structure clips the loss at $0$.)
\paragraph{Case 2, collisions:} We provide crude upper and lower bounds in the case of collisions. Assume that $x_i^{-}$ is a collision, so that this term in the hinge loss is:
    \begin{align*}
        & \EE_{\sigma,\sigma^+,\sigma_i^-} \max\{0, 1 + (1+\sigma\epsilon)(1 + \sigma_i^-\epsilon - 1 - \sigma^+\epsilon)\} = \EE_{\sigma,\sigma^+,\sigma_i^-} \max\{0, 1 + (1+\sigma\epsilon)((\sigma_i^- - \sigma^+)\epsilon)\} \\
        & \geq  \max\{0, \EE_{\sigma,\sigma^+,\sigma_i^-} 1 + (1+\sigma\epsilon)((\sigma_i^- - \sigma^+)\epsilon)\} = 1.
    \end{align*}
    Here, the inequality is Jensen's inequality and then we use independence of the Rademacher random variables. This is clearly a lower bound on the expected NCE loss when there is a collision.
    We can also upper bound the loss when there is a collision by setting $\sigma_i^+ = 1$
    \begin{align*}
        \EE_{\sigma,\sigma^+,\sigma_i^-}\max\{0, 1+(1+\sigma\epsilon)((\sigma_i^- - \sigma^+)\epsilon\} \leq \EE_{\sigma,\sigma^+}\max\{0, 1 + (1+\sigma\epsilon)(1-\sigma^+)\epsilon\} = 1+\epsilon.
    \end{align*}
    In addition, since $\epsilon \leq 1$, for any values of $\sigma,\sigma^+$, this term dominates the loss incurred from any other negative. Thus this is an upper bound on the NCE-loss when there is a collision. 
Thus, we have the bounds
\begin{align*}
    (1-\tau_k)(\epsilon^2/4 + \epsilon/2) + \tau_k \leq  \Lcal_{\unsup}^{(k)}(f_2) \leq (1-\tau_k)(\epsilon^2/4 + \epsilon/2) + \tau_k (1+\epsilon).
\end{align*}

Actually we can obtain a much better lower bound. Note that the smallest realization of a collision loss is $1 + (1+\epsilon)(-2\epsilon) = 1 - 2\epsilon - 2\epsilon^2$. This quadratic is positive for $\epsilon \in [0, (\sqrt{3}-1)/2]$, which we assume going forward. With this choice of $\epsilon$, there will be no clipping and we can ignore taking the max with $0$. Next, observe that the loss we incur for a non-colliding example is $\max\{0, 1+(1+\sigma\epsilon)(-1 - \sigma^+\epsilon)\}$. We can see that for any realizations of $\sigma,\sigma^+,\sigma_i^-$ 
\begin{align*}
    1+(1+\sigma\epsilon)(-1 - \sigma^+\epsilon) \leq 1 + (1+\sigma\epsilon)(\sigma_i^- - \sigma^+)\epsilon, 
\end{align*}
which means that we can ignore all of the non-colliding examples when evaluating the loss. Thus, the loss is
\begin{align*}
    \EE \max\{0, 1+(1+\sigma \epsilon)(\max_{i \in I} \sigma_i^- - \sigma^+)\epsilon \} = 1 + \epsilon \cdot \EE \max_{i \in I} \sigma_i^-, %
\end{align*}
where $I$ is the set of colliding examples. This is clearly upper bounded by $\tau_k(1+\epsilon)$ after taking into account the probability of a collision. For a lower bound, we have
\begin{align*}
    &\EE\sbr{\one\{|I|>0\}(1 + \epsilon \max_{i \in I} \sigma_i)} = \tau_k + \epsilon \EE\sbr{\one\{|I| > 0\}\max_{i \in I}\sigma_i}\\
    & = \tau_k + \epsilon\left(\sum_{t=1}^k {k \choose t} \rbr{\frac{1}{N}}^t \rbr{1 - \frac{1}{N}}^{k-t} \rbr{ (1 - (1/2)^t)\cdot 1 + (1/2)^t\cdot(-1)}\right)\\
    & = \tau_k + \epsilon\left(\sum_{t=1}^k {k \choose t} \rbr{\frac{1}{N}}^t \rbr{1 - \frac{1}{N}}^{k-t} \rbr{ 1 - 2 (1/2)^t}\right)\\
    & = \tau_k + \epsilon\left(\tau_k - 2 \sum_{t=1}^k {k \choose t} \rbr{\frac{1}{N}}^t \rbr{1 - \frac{1}{N}}^{k-t} (1/2)^t\right)\\
    & \geq \tau_k + \epsilon\left(\tau_k - \frac{k}{N}\rbr{1 - \frac{1}{N}}^{k-1} - \frac{1}{2} \sum_{t=2}^k {k \choose t} \rbr{\frac{1}{N}}^t \rbr{1 - \frac{1}{N}}^{k-t}\right)\\
    & = \tau_k + \epsilon/2\cdot\rbr{\tau_k - \frac{k}{N}\rbr{1 - \frac{1}{N}}^{k-1}}.
\end{align*}
Here the only lower bound uses that in the terms where $t \geq 2$ we can upper bound $(1/2)^t \leq 1/4$. Combining this with the term from the ``no-collision'' case, we have
\begin{align*}
    \Lcal_{\unsup}^{(k)}(f_2) & \geq (1-\tau_k)(\epsilon^2/4 + \epsilon/2) + \tau_k + \epsilon/2\rbr{\tau_k - \frac{k}{n}\rbr{1 - \frac{1}{N}}^{k-1}}\\
    & = \tau_k  + (1-\tau_k)\epsilon^2/4 + \epsilon/2\rbr{1 - \frac{k}{N}\rbr{1-\frac{1}{N}}^{k-1}}.
\end{align*}

Finally, notice that the upper bound on the collision case is tight if there is a collision that has $\sigma_i^+ = 1$. In the asymptotic regime where $k \to \infty$ we have that the probability of such a collision goes to $1$, and simultaneously $\tau_k \to 1$. Thus in this limit this upper bound is tight, so we obtain $\lim_{k \to \infty}\Lcal_{\unsup}^{(k)}(f_2) = 1+\epsilon$.

\paragraph{Dependence on $k/N$.}
Next, we claim that (bounds on) the first crossing point between the NCE losses for $f_1$ and $f_2$ depend on $k$ only through the function $k/N$. To see why, recall the elementary inequalities
\begin{align*}
    1-x \leq (1-x/n)^n \leq \exp(-x) \leq 1 - x + x^2
\end{align*}
This means that a sufficient condition for $f_1$ to be selected by NCE is
\begin{align*}
    \frac{2k}{N} \leq (1-k/N) (\epsilon^2/4 + \epsilon/2) + (k/N - k^2/N^2)
\end{align*}
While a sufficient condition for $f_2$ to be selected is
\begin{align*}
    2k/N - 4k^2/N^2 \geq (1 - k/N + k^2/N^2)(\epsilon^2/4 + \epsilon/2) + k/N (1+\epsilon)
\end{align*}
These are both quadratic inequalities in $k/N$. Thus if they admit solutions, these would have $k$ scaling linearly with $N$. 

\section{Experiment Details}
\label{app:experiment-details}

\subsection{Wiki-3029 NLP Experiments} We present additional details for the Wiki-3029 results below.

\paragraph{Dataset details.} We use the dataset provided by~\citet{saunshi2019theoretical}.\footnote{https://nlp.cs.princeton.edu/CURL/raw/} This dataset was released by the authors, and we also explicitly obtained their permission to use the dataset for this study. We use the scripts provided to us by the authors to generate the Wiki-3029 dataset.

\paragraph{Training details.} We perform NCE training for a fixed number of epochs. In each epoch, we take sentences from each article and pair them randomly to form a set of positively aligned examples $(x, x^+)$. We mix pairs across classes and divide this dataset into batches of a given size $B$ and fit the representation using the Adam optimizer on the NCE loss. Formally, for a given batch $\Bcal = \{(x_i, x_i^+)\}_{i=1}^B$ we optimize the loss:
\begin{equation*}
    \Lcal_{\mathrm{nce}}(\theta) = -\frac{1}{2B} \sum_{i=1}^B \ln\frac{w(x_i, x^+_i)}{\sum_{j \ne i} \left\{w(x_i, x^+_j) + w(x_i, x_j)\right\}} -\frac{1}{2B}\sum_{i=1}^B \ln\frac{w(x_i, x^+_i)}{\sum_{j \ne i} \left\{w(x^+_i, x^+_j) + w(x^+_i, x_j)\right\}},
\end{equation*}
where $w(x, x') = \exp(\nicefrac{f_\theta(x)^\top f_\theta(x')}{\tau})$, $\theta$ is the parameters of the bag-of-words model $f$, and $\tau$ is a temperature hyperparameter. The parameters $\theta$ consists of word embedding for every token in the vocabulary including the {\tt UNK} token.

At the end of NCE training, we compute a mean classifier vector $v_c$ for an article $c$ by averaging $f_\theta(x)$ for every sentence $x$ from article $c$ in the training dataset. For a given sentence $x'$ in the test dataset, we assign it the class $\arg\max_c f_\theta(x')^\top v_c$.

We further train a linear classifier on top of $f_\theta(x)$ for a fixed number of epochs. We optimize the logistic loss and use Adam optimizer. Formally, we use the entire NCE training dataset and do mini-batch learning. Given a batch $\{(x_i, c_i)\}_{i=1}^B$ of pairs $i$ and sentences $c_i$ we optimize the loss:
\begin{equation*}
\Lcal_{\mathrm{downstream}}(W) = - \frac{1}{B}\sum_{i=1}^B \ln \frac{\exp\left( (W^\top f_\theta(x_i))_{c_i}\right)}{\sum_{c'}\exp\left((W^\top f_\theta(x_i))_{c'}\right)},
\end{equation*}
where $W$ is the parameters of the linear classifier. We keep the parameters $\theta$ fixed when training the linear classifier. We save the linear classifier at the end of each epoch and compute the aggregate training loss recorded in this epoch. Finally, we report performance of the model with the smallest aggregate  training loss.

We initialize $W$ and $\theta$ randomly using default PyTorch initialization. 

\paragraph{Hyperparameters.} We used the following hyperparameters for the Wiki-3029 experiments. 

\begin{table*}[h]
    \centering
    \begin{tabular}{l|c}
        \toprule        \textbf{Hyperparameters} & \textbf{Values} \\
        \midrule
        Word Embedding & 768\\
        Optimization method & Adam\\
        Parameter initialization & PyTorch 1.4 default\\
        Temperature & 1\\
        Gradient clip norm & 2.5\\
        NCE epochs & 50\\
        Downstream training epoch & 50\\
        Learning rate & $0.01$\\
        Model architecture & Bag of Words\\
        Data augmentation & None\\
        \bottomrule
    \end{tabular}
    \caption{Hyperparameters for Wiki-3029}
    \label{tab:nlp_hyperparameter}
\end{table*}

\paragraph{Detailed Figures with Error Bars.} 

We show a detailed version of the left plot in~\pref{fig:nlp-exp} in~\pref{fig:wiki3029_n_k_mean_classifier}. This shows the mean classifier accuracy on varying $N$ and $k$ for a fixed value of $B=1000$. The gap between the best and worst performance in each row is quite noticeable ($\approx$ 2-3\%) and this is much larger than the standard deviation ($\le 0.25$).
We also plot the performance of the trained linear classifier in~\pref{fig:wiki3029_n_k_trained_classifier}. Note that the general trend seems to be similar to~\pref{fig:wiki3029_n_k_mean_classifier} and an intermediate value of $k$ seems to be best for any value of $N$. However, we see that in this case the value of $k=50$ is optimal across all values of $N$.

\begin{figure}%
    \centering
    \includegraphics[width=12cm]{./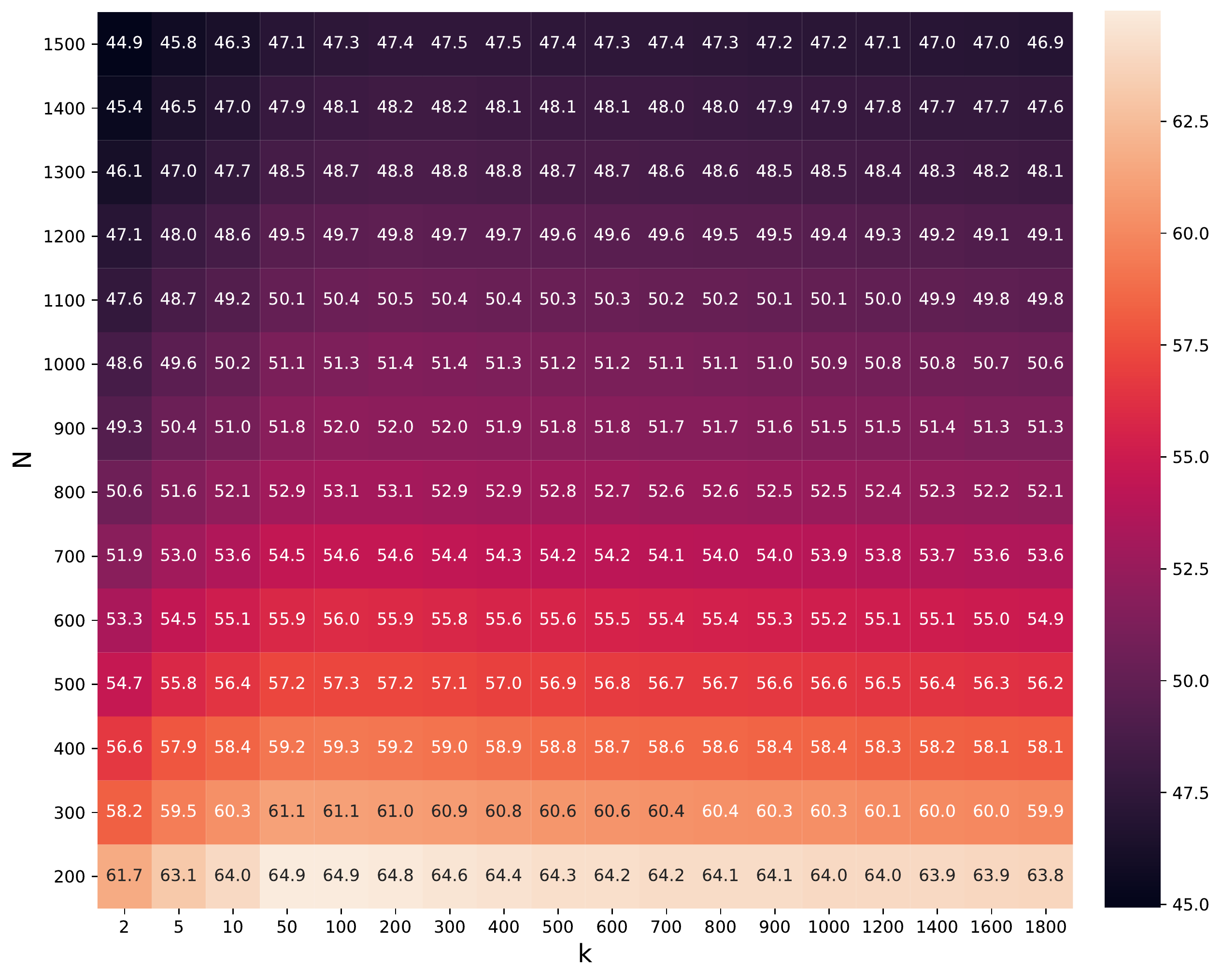}
    \caption{
      Mean classifier accuracy on the Wiki-3029 dataset for different values of number of classes $(N)$ and number of negative examples $(k)$, and a fixed batch size of 1000. 
      Values show average performance across 5 trials with different seeds, and all standard deviations are at most $0.25$ and typically much smaller.
      }
    \label{fig:wiki3029_n_k_mean_classifier}%
\end{figure}

\begin{figure}%
    \centering
    \includegraphics[width=12cm]{./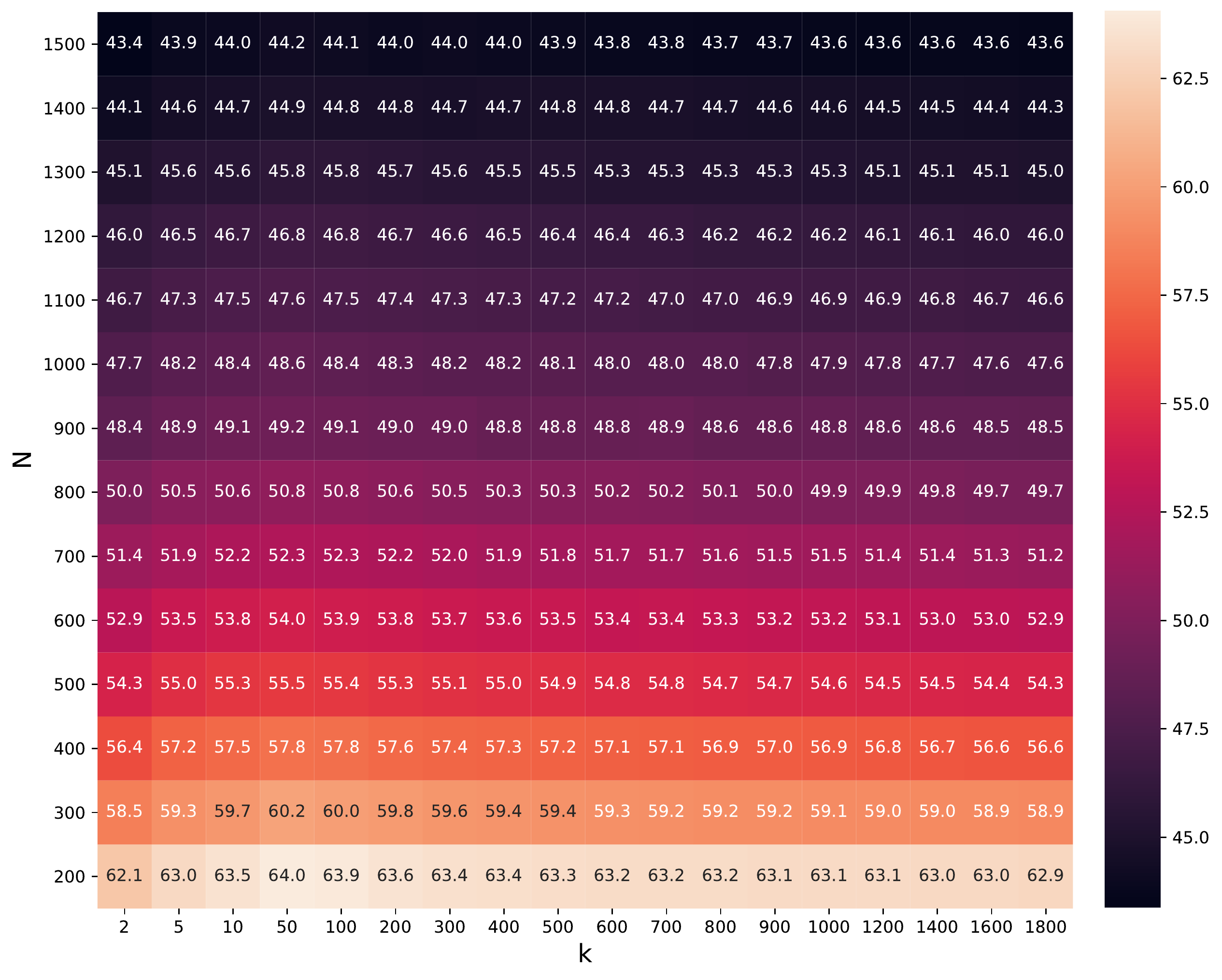}
    \caption{Trained linear classifier accuracy on the Wiki-3029 dataset for different values of number of classes $(N)$ and number of negative examples $(k)$, and a fixed batch size of 1000.
      Values show average performance across 5 trials with different seeds and all standard deviations are at most $0.28$ and typically much smaller.}
    \label{fig:wiki3029_n_k_trained_classifier}%
\end{figure}

We show a detailed version of the center plot in~\pref{fig:nlp-exp} in~\pref{fig:wiki3029_b_vs_k}. This shows the mean and trained classifier accuracy for different values of $B, k$, and $N$. We observe that gap between the best and worst performance is noticeable ($\approx$1\% for the mean classifier and $\approx$2-3\% for the trained classifier). Further, standard deviation is sufficiently low to enable reading trends from the mean performance. 
For the trained classifier, we observe that optimal performance is received with largest value of $B$ and an intermediate value of $k$ similar to the mean classifier. 
\begin{figure}%
    \centering
    \subfigure[Mean classifier]{{\includegraphics[width=7.5cm]{./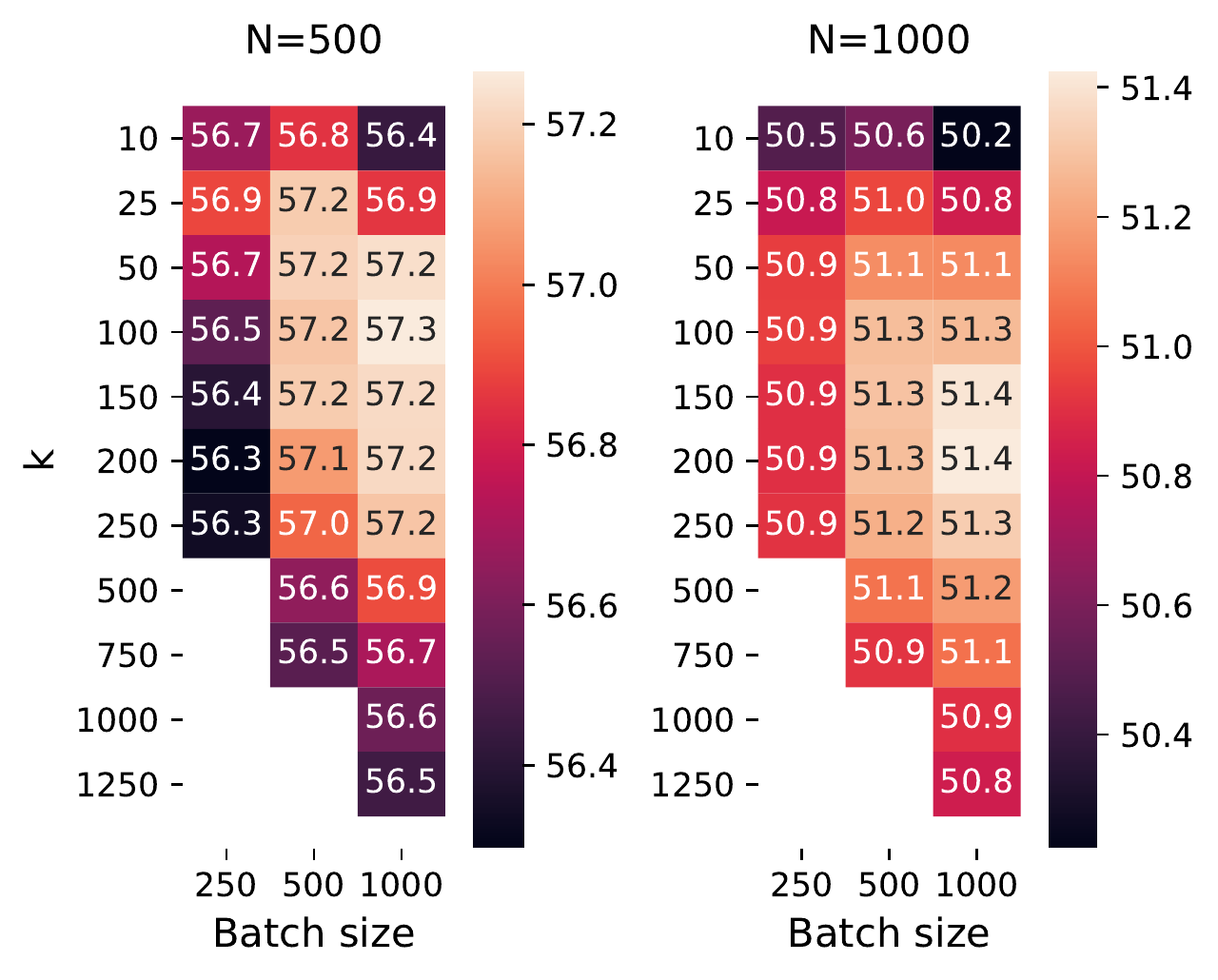} }}%
    \quad
    \subfigure[Trained classifier]{{\includegraphics[width=7.5cm]{./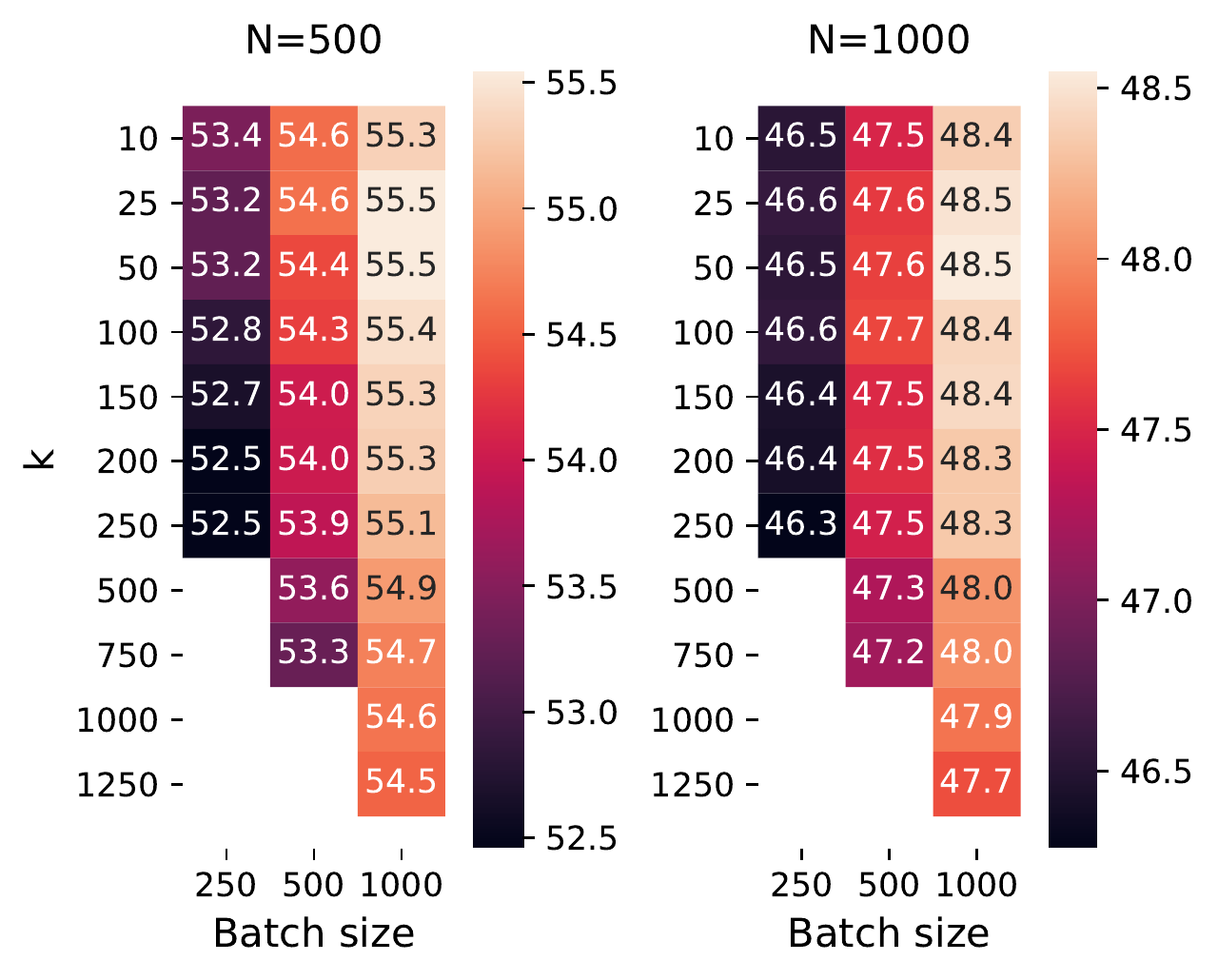} }}%
    \caption{Mean and trained classifier accuracy on Wiki-3029 dataset
      for different values of batch size $B$ and number of negatives
      examples $k$ and two values of the number of classes $N$. \textbf{Left:} 
      mean classifier performance (All standard deviations are less
      that $0.16$). \textbf{Right:} trained linear classifier performance (All
      standard deviations are less that $0.26$).}
    \label{fig:wiki3029_b_vs_k}
\end{figure}

\paragraph{Compute Infrastructure.} We run our experiments on a cluster containing a mixture of P40, P100, and V100 GPUs. Each experiment runs on a single GPU in a docker container and takes less than 5 hours to complete with a V100 GPU.

\subsection{Vision Experiment Details}

Vision experiments used the same infrastructure as NLP experiments, but took roughly two days to fully run. We train models on the NCE task for 400 epochs. We used the following hyperparameters to run the included CIFAR-10 experiments:

\begin{table*}[h]
    \centering
    \begin{tabular}{l|c}
        \toprule        \textbf{Hyperparameters} & \textbf{Values} \\
        \midrule
        Latent dimension & 128\\
        Gradient clip norm & None\\
        Optimization method & Adam\\
        Parameter initialization & PyTorch 1.4 default\\
        Learning rate & $10^{-4}$\\
        Model architecture & ResNet-18\\
        \bottomrule
    \end{tabular}
    \caption{Hyperparameters for vision}
    \label{tab:cifar_hyperparameter}
\end{table*}

\paragraph{Additional Experiments.}
Here we provide a plot similar to \pref{fig:visionExp}, but using a mean classifier rather than a trained, linear classifier for the downstream task. The trend in \pref{fig:visionExpMeans} is similar, with results varying based on both dataset and whether or not augmentation is present. Interestingly, mean classifier performance on SVHN with augmented data is significantly worse than the trained classifier version in \pref{fig:visionExp}. This is possibly due to the fact that the augmentation was not designed for digit data, including, for example, horizontal flips.

\begin{figure}[t]
\hspace{-0.8cm}
\includegraphics[trim={0cm, 0cm, 0cm, 0}, clip, scale=0.6]{./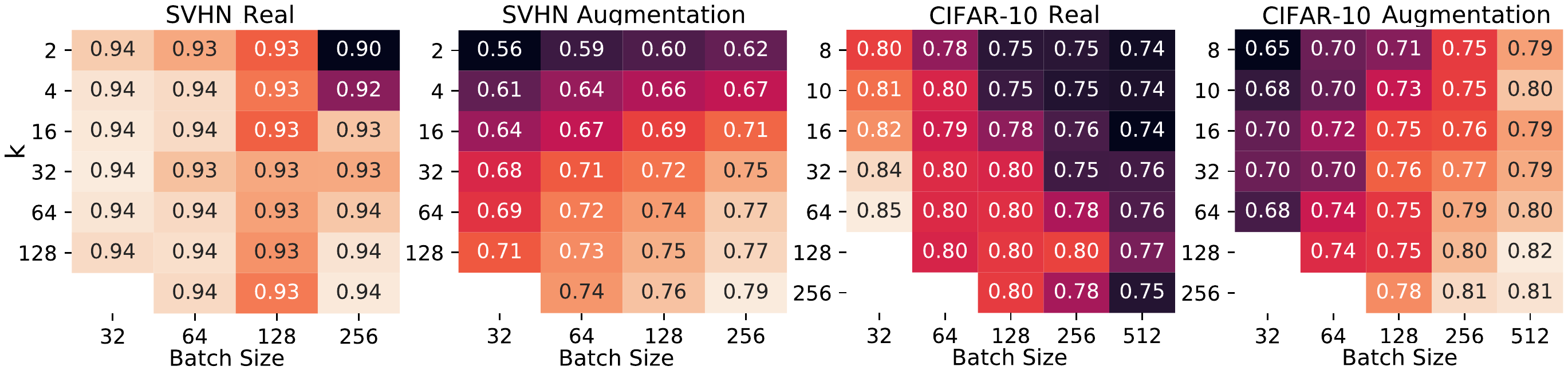}
\caption{A companion to \pref{fig:visionExp}, showing mean classifier performance, as used in our theory, rather than trained classifier performance. Here we show classifier accuracy as a function of both batch size and $k$ for various vision datasets and NCE parameters. \textbf{Left}: SVHN data, using a true positive sample from the same class, as used in our analysis. \textbf{Center Left}: SVHN data, but using data augmentation. Here the positive sample is an augmented version of the reference sample, and negatives are augmented as well. \textbf{Center Right}: NCE on CIFAR-10 data, using real positive examples. \textbf{Right:} CIFAR-10 results with data augmentation.}
\label{fig:visionExpMeans}
\end{figure}

\end{document}